\pgfplotsset{compat=1.18}
\newlength\algowd
\newcommand{\Prod}[3][L^2]{\left(#2, #3\right)_{#1}}
\newcommand{\deriv}[2]{\frac{\partial {#1}}{\partial {#2}}}
\newtheorem{lemma}{Lemma}
\newtheorem{theorem}{Theorem}
\DeclareMathOperator*{\Span}{span}
\newcommand{\Vsigma}{\boldsymbol{\sigma}}
\newcommand{\Vtau}{\boldsymbol{\tau}}
\newcommand{\Norm}[2][]{\left\|#2\right\|_{#1}}
\begin{document}

\begin{frontmatter}

\title{Collocation-based Robust Variational Physics-Informed Neural Networks (CRVPINN)}

\author{Marcin \L{}o\'s$^1$,
Tomasz S\l{}u\.zalec$^1$,
Pawe\l{} Maczuga$^1$,
Askold Vilkha$^1$, \\
Carlos Uriarte$^2$, 
Maciej Paszy\'nski$^1$}

\address{$^{(1)}$AGH University of Krakow,
Poland \\
e-mail:  \{los,pmaczuga,maciej.paszynski\}@agh.edu.pl \\
$^{(2)}$ University of the Basque Country (UPV/EHU), Leioa, Spain \\
e-mail: carlos.uriarte@ehu.eus}

\begin{abstract}
Physics-Informed Neural Networks (PINNs) have been successfully applied to solve Partial Differential Equations (PDEs). Their loss function is founded on a strong residual minimization scheme. Variational Physics-Informed Neural Networks (VPINNs) are their natural extension to weak variational settings. In this context, the recent work of Robust Variational Physics-Informed Neural Networks (RVPINNs) highlights the importance of conveniently translating the norms of the underlying continuum-level spaces to the discrete level. Otherwise, VPINNs might become unrobust, implying that residual minimization might be highly uncorrelated with a desired minimization of the error in the energy norm. However, applying this robustness to VPINNs typically entails dealing with the inverse of a Gram matrix, usually producing slow convergence speeds during training. 
In this work, we accelerate the implementation of RVPINN, establishing a LU factorization of sparse Gram matrix in a kind of point-collocation scheme with the same spirit as original PINNs. 
We call out method the Collocation-based Robust Variational Physics Informed Neural Networks (CRVPINN). We test our efficient CRVPINN algorithm on Laplace, advection-diffusion, and Stokes problems in two spatial dimensions.
\end{abstract}
	
\begin{keyword}
Physics-Informed Neural Networks \sep Robust loss functions \sep Discrete inf-sup condition \sep Laplace problem \sep Advection-diffusion problem \sep Stokes problem 
\end{keyword}

\end{frontmatter}

\section{Introduction}
The extraordinary success of Deep Learning (DL) algorithms in various scientific fields \cite{c17,c28,c13} over the last decade has recently led to the exploration of the possible applications of (deep) neural networks (NN) for solving partial differential equations (PDEs). The exponential growth of interest in these techniques started with the Physics-Informed Neural Networks (PINN) (\cite{c37}). This method takes into account the physical laws described by PDEs during the learning process. The network is trained using the strong residual evaluated at the set of points selected in the computational domain and its boundary. PINNs have been successfully applied to solve a wide range of problems, from fluid mechanics \cite{c3,c34}, in particular Navier-Stokes equations \cite{c29,c42,c45}, wave propagation \cite{c38,c33,c12}, phase-filed modeling \cite{c15}, biomechanics \cite{c1,c27}, quantum mechanics \cite{c21}, electrical engineering \cite{c36}, problems with point singularities \cite{c19}, uncertainty qualification \cite{c46}, dynamic systems \cite{c44,c24}, or inverse problems \cite{c6,c35,c32}, among many others.

A natural continuation to PINNs into the concept of weak residuals is the so-called Variational PINNs (VPINNs) \cite{c22}. VPINN employs a variational loss function to minimize during the training process. 
The VPINN method has also found several applications, from Poisson and advection-diffusion equations \cite{c23}, non-equilibrium evolution equations \cite{c18}, solid mechanics \cite{c30}, fluid flow \cite{c25}, and inverse problems \cite{c31,c2}, among others.
Recently, the the Robust Variational Physics Informed Neural Networks (RVPINNs), has been proposed in \cite{c40}. The authors consider the modified loss multiplies with the inverse of the Gram matrix. 
The RVPINN introduces the robust loss function, that is the lower and upper bound for the true error. In other words, while solving PDEs we
monitor the training of the RVPINNs using the robust loss, and we know the quality of the trained solution. Thus, we know when to stop the training. This is especially true when we do not know the exact solution, we can control the quality of the solution, by looking at the value of the robust loss computer during the training.

However, RVPINN requires expensive integration for weak residuals, and the Gram matrix is also computing with expensive integrals. 
In this paper, we develop efficient collocation method for RVPINN.
{In our approach we replace the continuous domain $\Omega$ by the discrete set of collocation points $\Omega_h$. We derive discrete weak formulations, using the Kronecker deltas as test functions. 
We redefine the discrete integration by parts, and discrete Poincare inequality.
In this discrete setup, we show that discrete weak formulation are continuous, and inf-sup stable.
Using the discrete inner product related with the inf-sup stability norm, and the Kronecker delta test functions, we derive the Gram matrix, that has a similar structure as for the finite difference method \cite{c41}.
We introduce a robust loss function in terms of the discrete residuals and the inverse of the discrete Gram matrix.
We compute the Gram matrix resulting from the discrete inner product, and we define our robust loss function as the inverse of the Gram matrix multiplied by the square of the discrete residual. 
The benefit of our method is that the same Gram matrix
and its inverse can be used with a large class of PDEs, especially since it does not depend
on the right-hand side.
For example, we can use an identical inverse of the Gram matrix with diffusion problems having different right-hand sides and advection-dominated diffusion problems with different right-hand sides.}

In our approach, we show how to use the knowledge developed for RVPINN without expensive integrations, using just points during the training process.

{We also show that our collocation method for RVPINN loss does not require computing of the inverse of the Gram matrix, since  we only compute the action of the inverse, which can be replaced by a solution of system of linear equations. The Gram matrix ${\bf G}$ can be LU factorized once at the beginning, and in each iteration we can perform forward and backward substitutions in a linear computational cost.}

The article is organized as follows. We start in Section 2 with description of the theoretical background for definition of the collocation method for RVPINN. 
Section 3 introduces four computational examples, namely the Laplace problem with the exact solution being the tensor product of sin functions, the Laplace problem with the exact solution being a combination of exponent and sin functions, the Poisson problem with non-constant diffusion, the Poisson problem with a jump, the advection-diffusion problem, and the Stokes problem. This section aims to illustrate how to define the robust loss functions for the exemplary problems.
Section 4 discusses the computational costs of collocation method for RVPINN. 
Finally, Section 6 presents the colab implementation of the method. The paper is concluded in Section 7.

\section{Abstract framework}\label{sec:abstract}

For concreteness, we consider the case of the 2D unit square domain~$(0, 1)^2$ with spatial resolution~$N \in \mathbb N$ and corresponding set of uniformly distributed collocation points
\begin{equation}
\Omega_h := \{ [ih,jh] \in (0,1)^2: 0 \leq i \leq N, 0 \leq j \leq N\},
\end{equation} where~$h=1/N$ denotes the discretization size. We consider
\begin{equation}
D_h := \{ u:\Omega_h\longrightarrow\mathbb{R}\}\cong \mathbb{R}^{(N+1)^2}
\end{equation} and equip it with the following discrete inner product and induced norm:
\begin{equation}
(u,v)_h := h^2 \sum_{p\in \Omega_h} u(p)v(p),\qquad \|u\|_h^2 :=(u,u)_h, \qquad u,v\in D_h.
\end{equation}

Now, we follow the notation convention below for simplicity:
\begin{equation}
u_{i,j}:=u(ih,jh),\qquad 0\leq i,j\leq N.
\end{equation} We introduce a canonical orthonormal basis for $D_h$ given by a set of functions $\delta_{i,j}:\Omega\longrightarrow\mathbb{R}$, $0\leq i,j\leq N$, that behave as Kronecker deltas over $\Omega_h$,
\begin{equation}
\label{eq:test}
\delta_{i,j}(x)=
\displaystyle{\left\{\begin{aligned}
&1 \quad \textrm{ if }x=x_{i,j}, \\
&0 \quad \textrm{ if }x\neq x_{i,j}.
\end{aligned}
\right.}
\end{equation} 

Following the spirit of finite differences, we consider the finite gradient operations given by
\begin{align}
\nabla_{+} u_{i,j} := \left(\nabla_{x+} u_{i,j},\nabla_{y+} u_{i,j}\right) := 
\left(\frac{u_{i+1,j}-u_{i,j}}{h},\frac{u_{i,j+1}-u_{i,j}}{h}\right),\\
\nabla_{-} u_{i,j} := \left(\nabla_{x-} u_{i,j},\nabla_{y-} u_{i,j}\right) := 
\left(\frac{u_{i,j}-u_{i-1,j}}{h},\frac{u_{i,j}-u_{i,j-1}}{h}\right),
\end{align} for $0\leq i\pm 1, j\pm 1\leq N$. As a result, we can define the following discrete inner product according to these gradient values: 
\begin{align}
\label{eq:H1}
(u,v)_{\nabla,h} &:= 
(\nabla_{x+}u,\nabla_{x+}v)_h + 
(\nabla_{y+}u,\nabla_{y+}v)_h \\
&\phantom{:}= (\nabla_{x-}u,\nabla_{x-}v)_h + 
(\nabla_{y-}u,\nabla_{y-}v)_h,
\end{align} with corresponding induced norm
\begin{equation}
\label{eq:H1norm}
\|u\|_{\nabla,h}^2 :=(u,u)_{\nabla,h} = \Vert \nabla_{x+} u\Vert_h^2 + \Vert \nabla_{y+} u\Vert_h^2.
\end{equation} 

At this point, it is convenient to establish the discretization space that possesses homogeneous boundary conditions:
\begin{equation}
D_{0,h} = \{ u\in D_h: u\vert_{\partial\Omega}=0\}.
\end{equation} Thus, $D_{0,h}$ is an $(N-1)^2$-dimensional space with basis $\{\delta_{i,j}\}_{0<i,j<N}$.

Below, we show three properties of interest in this discrete gradient setting.

\begin{lemma}[Discrete integration by parts]
\label{lem:integration-by-parts}
Given~$u,v\in D_{0,h}$, it satisfies
\begin{equation}
\begin{aligned}
  \left(\nabla_{x+} u, v\right)_h &=
  - \left(u, \nabla_{x-} v\right)_h \\
  \left(\nabla_{y+} u, v\right)_h &=
  - \left(u, \nabla_{y-} v\right)_h
\end{aligned}
\end{equation}
\end{lemma}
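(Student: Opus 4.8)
The plan is to prove the two identities by the discrete analogue of integration by parts, namely Abel summation (summation by parts), carried out one Cartesian direction at a time. Since the two claims are identical up to interchanging the roles of the indices $i$ and $j$, I would treat in full only the first identity, involving $\nabla_{x+}$ and $\nabla_{x-}$, and obtain the second by the same argument with the indices swapped.

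First I would expand the left-hand side directly from the definitions, cancelling one factor of $h$ against the difference quotient:
\begin{equation}
\left(\nabla_{x+} u, v\right)_h = h^2 \sum_{i,j} \frac{u_{i+1,j}-u_{i,j}}{h}\, v_{i,j} = h\sum_{j}\sum_{i}\left(u_{i+1,j}-u_{i,j}\right)v_{i,j}.
\end{equation}
For each fixed $j$, I would split the inner sum as $\sum_i u_{i+1,j}v_{i,j} - \sum_i u_{i,j}v_{i,j}$ and reindex the first sum by $k=i+1$, turning it into $\sum_k u_{k,j}v_{k-1,j}$. Recombining gives $\sum_i u_{i,j}\left(v_{i-1,j}-v_{i,j}\right) = -\sum_i u_{i,j}\left(v_{i,j}-v_{i-1,j}\right)$, and recognizing the backward difference $v_{i,j}-v_{i-1,j}=h\,\nabla_{x-}v_{i,j}$ recovers exactly $-\left(u,\nabla_{x-}v\right)_h$ after summing over $j$ and restoring the factor $h$.

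The one place that requires care — and the main obstacle — is the bookkeeping of the endpoint terms generated by the index shift. The reindexing $k=i+1$ does not preserve the original summation range, so it produces boundary contributions at $i=0$ and $i=N$, equivalently terms involving $u_{0,j}$, $u_{N,j}$, $v_{0,j}$, and $v_{N,j}$. This is precisely where the hypothesis $u,v\in D_{0,h}$ enters: since both functions vanish on $\partial\Omega$, every such boundary term is zero and the summation-by-parts identity holds with no remainder. I would make this explicit by carrying the endpoint terms through the shift and then setting them to zero by the homogeneous boundary conditions, rather than discarding them tacitly, as this is the only nontrivial point in the derivation.

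Finally, reassembling the cleaned-up per-$j$ identities yields the discrete inner product $-\left(u,\nabla_{x-}v\right)_h$, completing the $x$-direction claim; the $y$-direction identity follows from the identical computation with $i$ and $j$ interchanged, using $\nabla_{y+}$ and $\nabla_{y-}$ in place of $\nabla_{x+}$ and $\nabla_{x-}$.
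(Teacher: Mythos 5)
Your proof is correct, but it follows a different route from the paper. You carry out the classical Abel summation directly on the double sum: expand $\left(\nabla_{x+}u,v\right)_h$, shift the index in the term $\sum_i u_{i+1,j}v_{i,j}$, and observe that the endpoint contributions at $i=0$ and $i=N$ vanish because $u$ and $v$ lie in $D_{0,h}$. The paper instead exploits bilinearity of $(\cdot,\cdot)_h$ to reduce the claim to the basis case $u=\delta_{i,j}$, where $\nabla_{x+}\delta_{i,j}$ is supported at only the two points $x_{i-1,j}$ and $x_{i,j}$; the inner product then collapses to the two-term expression $h^{-1}v_{i-1,j}-h^{-1}v_{i,j}=-\nabla_{x-}v_{i,j}\cdot h$, which is immediately recognized as $-\left(\delta_{i,j},\nabla_{x-}v\right)_h$, with no index-shift bookkeeping at all. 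The paper's reduction buys brevity and hides the boundary issue inside the finite support of the delta function; your version buys transparency about exactly where the hypothesis $u,v\in D_{0,h}$ is used, since the boundary terms appear explicitly and are killed one by one. Your insistence on carrying the endpoint terms through the reindexing rather than discarding them tacitly is precisely the right care to take in the summation-by-parts formulation, and the $y$-direction claim indeed follows by the symmetric argument in both treatments.
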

\begin{proof}
By bilinearity of the inner product, it is enough to show it for $u = \delta_{i,j}$.
We have
\begin{equation}
  \nabla_{x+} \delta_{i,j} = 
  \begin{cases}
    \phantom{-}h^{-1} & \text{at } x_{i-1,j}, \\
    -h^{-1} & \text{at } x_{i, j}, \\
    0 & \text{elsewhere}.
  \end{cases}
\end{equation} Then,
%
%
\begin{equation}
\begin{aligned}
  \left(\nabla_{x+} \delta_{i,j}, v\right)_h &=
  h^{-1} v_{i-1, j} + \left(-h^{-1}\right) v_{i,j} \\&=
  - \nabla_{x-}v_{i,j} =
  -\left(\delta_{i,j}, \nabla_{x-}v\right)_h.
\end{aligned}
\end{equation} The proof is similar for the $y$-axis.
\end{proof}

\begin{lemma}[Discrete product rule]
Given~$u,v \in D_{0,h}$, it satisfies
\begin{equation}
\begin{aligned}    
  \nabla_{x+}(u v)_{i,j} &=
  u_{i+1,j}(\nabla_{x+}v)_{i,j} + (\nabla_{x+}u)_{i,j}v_{i,j}.\\
    \nabla_{y+}(u v)_{i,j} &=
  u_{i,j+1}(\nabla_{y+}v)_{i,j} + (\nabla_{y+}u)_{i,j}v_{i,j}.
\end{aligned}
\end{equation}
\end{lemma}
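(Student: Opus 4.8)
The plan is to verify the identity by direct expansion of the forward difference operator, exploiting the elementary add-and-subtract device that underlies the classical Leibniz rule. Since the two displayed lines are structurally identical up to exchanging the roles of the two coordinate directions, I would prove the $x$-statement in full and then remark that the $y$-statement follows verbatim with $i+1$ replaced by $j+1$ in the shifted index.

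First I would unfold the definition of $\nabla_{x+}$ applied to the pointwise product $uv$:
\begin{equation}
\nabla_{x+}(uv)_{i,j} = \frac{(uv)_{i+1,j}-(uv)_{i,j}}{h} = \frac{u_{i+1,j}v_{i+1,j}-u_{i,j}v_{i,j}}{h}.
\end{equation}
The key step is then to insert the cross term $u_{i+1,j}v_{i,j}$ into the numerator with opposite signs, writing
\begin{equation}
u_{i+1,j}v_{i+1,j}-u_{i,j}v_{i,j} = u_{i+1,j}\bigl(v_{i+1,j}-v_{i,j}\bigr) + \bigl(u_{i+1,j}-u_{i,j}\bigr)v_{i,j}.
\end{equation}
Dividing through by $h$ and recognizing the two bracketed quotients as $(\nabla_{x+}v)_{i,j}$ and $(\nabla_{x+}u)_{i,j}$ respectively yields exactly the claimed right-hand side. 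The $y$-direction is handled by the same insertion of $u_{i,j+1}v_{i,j}$.

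There is no genuine obstacle here; the only point requiring care is the asymmetry in the statement, namely that the shifted factor $u_{i+1,j}$ multiplies $\nabla_{x+}v$ while the unshifted factor $v_{i,j}$ multiplies $\nabla_{x+}u$. This asymmetry is dictated entirely by which cross term one chooses to add and subtract: inserting $u_{i,j}v_{i+1,j}$ instead would produce the mirror-image grouping $v_{i+1,j}(\nabla_{x+}u)_{i,j} + u_{i,j}(\nabla_{x+}v)_{i,j}$, which is equally valid, so I would simply follow the convention fixed by the statement. I would also note that, unlike the integration-by-parts lemma, this identity is purely pointwise and algebraic: it holds for arbitrary $u,v\in D_h$ and makes no use of the homogeneous boundary conditions built into $D_{0,h}$.
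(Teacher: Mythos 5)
Your proof is correct and follows essentially the same route as the paper: expand $\nabla_{x+}(uv)_{i,j}$, add and subtract the cross term $u_{i+1,j}v_{i,j}$, and regroup into the two forward differences. If anything you are slightly more careful than the paper's own computation, which drops the $h^{-1}$ factor in its intermediate lines; your observations about the mirror-image grouping and the fact that no boundary conditions are used are accurate but not needed.
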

\begin{proof}
\begin{equation}
\begin{aligned}    
  \nabla_{x+}(u v)_{i,j} &= u_{i+1,j} v_{i+1,j} - u_{i,j}v_{i,j} \\&=
  u_{i+1,j}v_{i+1,j} - u_{i+1,j}v_{i,j} + u_{i+1,j}v_{i,j} - u_{i,j}v_{i,j} \\&=
  u_{i+1,j}(\nabla_{x+}v)_{i,j} + (\nabla_{x+}u)_{i,j}v_{i,j}.
\end{aligned}
\end{equation} The proof is similar for the $y$-axis.
\end{proof}

\begin{lemma}[Discrete norm equivalence]
\label{lem:equivalence}
There exist constants~$0<c< C$ such that 
\begin{equation}
  c \|u\|_{\nabla,h}\leq \|u\|_h \leq C \|u\|_{\nabla,h}, \qquad \forall u \in D_{0,h}.
\end{equation} The upper-bound inequality is typically referred to as Poincaré's inequality.
\end{lemma}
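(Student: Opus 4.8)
The plan is to prove the two inequalities separately, since they are genuinely different in nature: the upper bound (Poincaré) admits an $h$-independent constant $C$, whereas the lower bound is an inverse inequality whose constant $c$ necessarily degrades like $h$. Before starting, I would record the observation that on $D_{0,h}$ the quantity $\|\cdot\|_{\nabla,h}$ is in fact a genuine norm and not merely a seminorm: if $\|u\|_{\nabla,h}=0$ then every forward difference of $u$ vanishes, so $u$ is constant, and the homogeneous boundary condition forces $u\equiv 0$. Since $D_{0,h}$ is finite dimensional, equivalence of the two norms is then automatic; I would nonetheless exhibit explicit constants, because their dependence (or independence) on $h$ is the informative content of the statement.

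For the upper bound I would use a discrete fundamental theorem of calculus. Fixing a row index $j$ and using $u_{0,j}=0$, I write $u_{i,j}=\sum_{k=1}^{i}(u_{k,j}-u_{k-1,j})=h\sum_{k=1}^{i}(\nabla_{x+}u)_{k-1,j}$. Applying the discrete Cauchy--Schwarz inequality and bounding the number of summands by $N$ gives $u_{i,j}^2\le Nh^2\sum_{m}(\nabla_{x+}u)_{m,j}^2$. Summing over $i$ and $j$, multiplying by $h^2$, and using $Nh=1$ collapses the prefactor, since $(N+1)Nh^2=(N+1)/N=1+1/N\le 2$. This yields $\|u\|_h^2\le 2\,\|\nabla_{x+}u\|_h^2\le 2\,\|u\|_{\nabla,h}^2$, i.e. $\|u\|_h\le\sqrt{2}\,\|u\|_{\nabla,h}$, so $C=\sqrt{2}$ can be taken independently of $h$.

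For the lower bound I would argue directly from the definition of the difference quotients. Writing $\|\nabla_{x+}u\|_h^2=\sum_{i,j}(u_{i+1,j}-u_{i,j})^2$ and applying the elementary estimate $(a-b)^2\le 2a^2+2b^2$ termwise, each squared nodal value of $u$ is counted at most twice, so $\|\nabla_{x+}u\|_h^2\le 4\sum_{p\in\Omega_h}u(p)^2=4h^{-2}\|u\|_h^2$, and identically for the $y$-direction. Adding the two contributions gives $\|u\|_{\nabla,h}^2\le 8h^{-2}\|u\|_h^2$, equivalently $c\,\|u\|_{\nabla,h}\le\|u\|_h$ with $c=h/(2\sqrt{2})$.

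I expect the main obstacle to lie entirely in the upper (Poincaré) bound: one must telescope against the boundary condition, apply Cauchy--Schwarz at the right place, and then carry out the double summation carefully so that the factor $Nh=1$ actually cancels the powers of $h$ and leaves a constant bounded uniformly in $N$. The lower bound, by contrast, is an elementary pointwise estimate and requires no summation trickery. As an alternative route for the upper bound, one could instead invoke Lemma~\ref{lem:integration-by-parts} to identify $\|u\|_{\nabla,h}^2$ with $(-\Delta_h u,u)_h$ and reduce the claim to a lower bound on the smallest eigenvalue of the discrete Dirichlet Laplacian; I would avoid this, since it trades the self-contained telescoping argument for a spectral computation without improving the constant.
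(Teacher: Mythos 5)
Your lower bound is exactly the paper's: the same elementary estimate $(a-b)^2\le 2(a^2+b^2)$, the same count of at most two occurrences per nodal value, and the same constant $c=h/(2\sqrt{2})$. For the upper (Poincar\'e) bound, however, you take a genuinely different route. The paper mimics the continuous identity $\|u\|^2=\int \partial_x(x)\,u^2\,dx=-\int x\,\partial_x(u^2)\,dx$: it introduces the auxiliary function $\phi_{i,j}=ih$ with $\nabla_{x+}\phi=1$, combines the discrete integration by parts of Lemma~\ref{lem:integration-by-parts} with the discrete product rule (via the translation operator $\tau_x$) to write $\|u\|_h^2=-\left(\phi,(u+\tau_x u)\nabla_{x+}u\right)_h$, and concludes by Cauchy--Schwarz with $C=2$. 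You instead telescope from the boundary, $u_{i,j}=h\sum_{k=1}^{i}(\nabla_{x+}u)_{k-1,j}$, apply Cauchy--Schwarz to the sum, and use $Nh=1$ to absorb the powers of $h$, arriving at $C=\sqrt{2}$. Both arguments are correct and both deliver an $h$-independent upper constant, which is the substantive content of the lemma; the paper's version exercises the discrete calculus it has just built (Lemmas 1 and 2) and parallels the standard continuous proof, while yours is self-contained, requires neither auxiliary lemma, and yields a marginally sharper constant. Your preliminary observation that $\|\cdot\|_{\nabla,h}$ is a genuine norm on $D_{0,h}$ (so that equivalence is automatic in finite dimensions and the real issue is how the constants depend on $h$) is a useful point the paper leaves implicit.
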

\begin{proof} Let $u\in D_{0,h}$. From the inequality $(a-b)^2\leq 2(a^2+b^2)$ for all $a,b\in\mathbb{R}$, we deduce 
\begin{align}
    \Vert u\Vert_{\nabla,h}^2 &= \sum_{i,j} (u_{i+1,j}-u_{i,j})^2 + (u_{i,j+1}-u_{i,j})^2 \\
    &\leq 2\sum_{i,j} u_{i+1,j}^2 + u_{i,j}^2 + u_{i,j+1}^2 + u_{i,j}^2 = 8\sum_{i,j} u_{i,j}^2,
\end{align} where the summation indexes run along $0< i,j < N$. This provides a lower-bound constant $c=\frac{h}{2\sqrt{2}}$ for our norm equivalence inequality. 

For the upper-bound constant, we divide the proof into two parts. First, consider the translation operator~$\tau_x \colon D_{0,h} \rightarrow D_{0,h}$ defined pointwise by
\begin{equation}
  (\tau_x u)_{i, j} = 
  \begin{cases}
    u_{i+1, j}, & \text{if }i < N, \\
    0, & \text{if } i=N.
  \end{cases}
\end{equation}
Applying Lemma 2, we can write~$\nabla_{x+}(uv) = \tau_x u(\nabla_{x+}v) + (\nabla_{x+}u) v$.
Moreover, $\|\tau_x u\|_h = \|u\|_h$ for all ~$u \in D_{0,h}$
since only homogeneous boundary values are shifted out of the domain.

Second, consider~$\phi \in D_{0,h}$ such that~$\phi_{i,j} = ih\leq 1$ for all $0<i,j<N$. Then, $\nabla_{x+}\phi=1$. Consequently, we can write
\begin{equation}
\begin{aligned}
  \|u\|_h^2 &=
  \left(1, u^2\right)_h = 
  \left(\nabla_{x+}\phi, u^2\right)_h =
  - \left(\phi, \nabla_{x+} u^2\right)_h = \\
  &= - \left(\phi, (u + \tau_x u) \nabla_{x+}u\right)_h.
\end{aligned}
\end{equation}
Applying Cauchy Schwartz's inequality, we obtain
\begin{equation}
\begin{aligned}
  \|u\|_h^2 &
  \leq
  \|\phi\|_h \; \Big\lbrace\|u\|_h + \|\tau_x u\|_h \Big\rbrace \; \|\nabla_{x+}u\|_h
  \\&=
  2\; \|\phi\|_h \; \|u\|_h \; \|\nabla_{x+}u\|_h,
\end{aligned}
\end{equation} which implies
\begin{equation}
  \|u\|_h \leq 2\|\phi\|_h \|\nabla_{x+}u\|_h \leq 2 \|\nabla_{x+}u\|_h \leq 2 \|u\|_{\nabla,h}.
\end{equation}
\end{proof}

Now, we focus on the following convection-diffusion model problem: find $u\in D_{0,h}$ such that
\begin{equation}
    \beta_x  \nabla_{x+}u + \beta_y \nabla_{y+} u - \epsilon\Delta_h u = f,
\end{equation} where $f,\beta_x,\beta_y \in D_{0,h}$ are given source and coefficient functions, and $\Delta_h$ is the discrete Laplacian defined pointwise as
\begin{equation}
    \Delta_h u_{i,j}:=\frac{u_{i+1,j}-2u_{i,j}+u_{i-1,j}}{h^2}+\frac{u_{i,j+1}-2u_{i,j}+u_{i,j-1}}{h^2}.
\end{equation} It is immediate to check that $\Delta_h = \nabla_+\nabla_- = \nabla_-\nabla_+$.

Testing with $v \in D_{0,h}$,
\begin{equation}
\begin{aligned}
(\beta_x \nabla_{x+}u + \beta_y \nabla_{y+}u
-\epsilon \Delta_h u,v)_{h} =
(f,v)_{h},
\end{aligned}
\label{eq:weak}
\end{equation} and applying Lemma~\ref{lem:integration-by-parts} to the term with
the Laplacian, we obtain the following discrete weak variational reformulation: find $u\in D_{0,h}$ such that
\begin{equation}
\begin{aligned}\label{eq:discweak}
\overbrace{(\beta_x \nabla_{x+}u + \beta_y \nabla_{y+}u, v)_{h} +
\epsilon(\nabla_{+}u, \nabla_{+}v)_{h}}^{b(u,v)} 
= \overbrace{(f,v)_{h}}^{l(v)},\qquad \forall v\in D_{0,h},
\end{aligned}
\end{equation} where $b$ and $l$ are the corresponding discrete bilinear and linear forms over $(D_{0,h})^2$ and $D_{0,h}$, respectively.

Below we show the conditions under which \eqref{eq:discweak} satisfies the hypotheses of the Lax-Milgram theorem.

\begin{lemma}[Boundedness of $b$]
\label{lem:discrete-boundedness}
For bounded coefficients $\beta_x$ and $\beta_y$, there exists $\mu>0$ such that
\begin{eqnarray}
b(u,v)\leq \mu \; \|u\|_{\nabla,h} \; \|v\|_{\nabla,h},\qquad \forall u, v \in D_{0,h}.
\end{eqnarray}
\end{lemma}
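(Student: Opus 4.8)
The plan is to split the bilinear form $b$ into its advection contribution $(\beta_x \nabla_{x+}u + \beta_y \nabla_{y+}u, v)_{h}$ and its diffusion contribution $\epsilon(\nabla_{+}u, \nabla_{+}v)_{h}$, bound each separately by repeated use of the Cauchy--Schwarz inequality for the discrete inner product $(\cdot,\cdot)_h$, and invoke the discrete Poincaré inequality (Lemma~\ref{lem:equivalence}) at the single point where it is genuinely needed. A valid constant $\mu$ then emerges by summing the two estimates.

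The diffusion term is immediate. By the defining identity for the gradient inner product, $(\nabla_{+}u,\nabla_{+}v)_h = (\nabla_{x+}u,\nabla_{x+}v)_h + (\nabla_{y+}u,\nabla_{y+}v)_h$ is exactly $(u,v)_{\nabla,h}$, so Cauchy--Schwarz for this inner product gives $\epsilon(\nabla_{+}u,\nabla_{+}v)_h = \epsilon (u,v)_{\nabla,h} \leq \epsilon\,\|u\|_{\nabla,h}\,\|v\|_{\nabla,h}$, which already has the desired shape with constant $\epsilon$.

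For the advection term I would treat the $x$- and $y$-summands symmetrically. Applying Cauchy--Schwarz for $(\cdot,\cdot)_h$ to the $x$-summand and using the pointwise bound $|\beta_x|\leq B_x$ (which is what ``bounded coefficients'' supplies), one gets $(\beta_x \nabla_{x+}u, v)_h \leq \|\beta_x \nabla_{x+}u\|_h\,\|v\|_h \leq B_x\,\|\nabla_{x+}u\|_h\,\|v\|_h$, since $\|\beta_x \nabla_{x+}u\|_h^2 = h^2\sum_p \beta_x(p)^2(\nabla_{x+}u(p))^2 \leq B_x^2\|\nabla_{x+}u\|_h^2$. The factor $\|\nabla_{x+}u\|_h$ is controlled directly by the definition~\eqref{eq:H1norm}, because $\|u\|_{\nabla,h}^2 = \|\nabla_{x+}u\|_h^2 + \|\nabla_{y+}u\|_h^2 \geq \|\nabla_{x+}u\|_h^2$.

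The crucial and only nontrivial step is the remaining factor $\|v\|_h$: in the advection term the derivative falls on $u$ while $v$ itself (not its gradient) is tested, so one cannot bound $\|v\|_h$ trivially by the gradient norm. Here I would invoke the upper-bound (Poincaré) half of Lemma~\ref{lem:equivalence}, namely $\|v\|_h \leq C\,\|v\|_{\nabla,h}$, to obtain $(\beta_x \nabla_{x+}u, v)_h \leq B_x C\,\|u\|_{\nabla,h}\,\|v\|_{\nabla,h}$, together with the analogous bound $B_y C\,\|u\|_{\nabla,h}\,\|v\|_{\nabla,h}$ for the $y$-summand. Adding the advection and diffusion estimates yields $b(u,v)\leq \mu\,\|u\|_{\nabla,h}\,\|v\|_{\nabla,h}$ with $\mu = (B_x + B_y)C + \epsilon$. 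The essential obstacle is precisely the asymmetry of the advection form, and it is resolved entirely by the Poincaré inequality, which is why the Poincaré constant necessarily appears inside $\mu$.
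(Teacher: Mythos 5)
Your proof is correct and follows essentially the same route as the paper: Cauchy--Schwarz on each term of $b$, followed by the discrete Poincar\'e inequality of Lemma~\ref{lem:equivalence} to convert the $\|v\|_h$ factor from the advection term into $\|v\|_{\nabla,h}$. The only cosmetic difference is that you keep separate bounds $B_x$, $B_y$ where the paper uses $2\|\beta\|_\infty$, yielding the marginally sharper constant $\mu=(B_x+B_y)C+\epsilon$ in place of $2C\|\beta\|_\infty+\epsilon$.
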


\begin{proof}
Following Cauchy Schwarz's inequality,
\begin{equation}
\begin{aligned}
  b(u, v) &\leq 2\|\beta\|_\infty \; \|u\|_{\nabla,h}\; \|v\|_{h} + \epsilon\|u\|_{\nabla,h} \; \|v\|_{\nabla,h},\qquad u,v\in D_{0,h},
\end{aligned}
\end{equation}
where~$\beta :=(\beta_x,\beta_y)$ and $\|\beta\|_\infty := \max\{|(\beta_x)_{i,j}|, |(\beta_y)_{i,j}|\}$. Applying Lemma~\ref{lem:equivalence}, there exists~$C > 0$ such that
\begin{equation}
  b(u, v) \leq \overbrace{(2C \|\beta\|_\infty + \epsilon)}^{\mu}\; \|u\|_{\nabla,h}\; \|v\|_{\nabla,h}.
\end{equation}
\end{proof}

For simplicity, from now on, we restrict to the case of constant convection coefficients $\vert\beta_x\vert,\vert\beta_y\vert\leq\infty$. Hence, $\Vert \beta\Vert_{\infty} = \max\{\vert\beta_x\vert, \vert\beta_y\vert\}$.

\begin{lemma}[Coercivity of $b$]
\label{lem:discrete-inf-sup}
There exists $\alpha > 0$ such that
\begin{eqnarray}
b(u,u) \geq \alpha \; \|u\|_{\nabla,h}^2,\qquad\forall u \in D_{0,h},
\end{eqnarray} whenever $\epsilon > 2C\Vert\beta\Vert_\infty$, where~$C > 0$ is the constant from Poincarè's inequality.
\end{lemma}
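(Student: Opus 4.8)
The plan is to split $b(u,u)$ into its diffusive and convective contributions and show that under the smallness hypothesis on $\|\beta\|_\infty$ the diffusive part dominates. Setting $v=u$ in the definition of $b$ from \eqref{eq:discweak}, the diffusive term $\epsilon(\nabla_{+}u,\nabla_{+}u)_h$ equals exactly $\epsilon\|u\|_{\nabla,h}^2$ by \eqref{eq:H1norm}, so it already has the target form. The whole difficulty therefore reduces to controlling the convective term $(\beta_x\nabla_{x+}u+\beta_y\nabla_{y+}u,u)_h$ from below.

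First I would estimate the convective term in absolute value exactly as in the proof of Lemma~\ref{lem:discrete-boundedness}. Splitting it into its $x$- and $y$-parts and applying Cauchy--Schwarz componentwise gives
\[
\left|(\beta_x\nabla_{x+}u+\beta_y\nabla_{y+}u,u)_h\right|
\leq \|\beta\|_\infty\left(\|\nabla_{x+}u\|_h+\|\nabla_{y+}u\|_h\right)\|u\|_h.
\]
Since $\|\nabla_{x+}u\|_h\leq\|u\|_{\nabla,h}$ and $\|\nabla_{y+}u\|_h\leq\|u\|_{\nabla,h}$ by \eqref{eq:H1norm}, the parenthesis is bounded by $2\|u\|_{\nabla,h}$, yielding the estimate $2\|\beta\|_\infty\|u\|_{\nabla,h}\|u\|_h$ and keeping the factor $2$ consistent with the threshold in the statement.

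The key step is then to convert the remaining factor $\|u\|_h$ into $\|u\|_{\nabla,h}$ via Poincar\'e's inequality from Lemma~\ref{lem:equivalence}, namely $\|u\|_h\leq C\|u\|_{\nabla,h}$. This upgrades the bound to $2C\|\beta\|_\infty\|u\|_{\nabla,h}^2$, so in particular the convective term is bounded below by $-2C\|\beta\|_\infty\|u\|_{\nabla,h}^2$. Combining with the diffusive contribution produces
\[
b(u,u)\geq\left(\epsilon-2C\|\beta\|_\infty\right)\|u\|_{\nabla,h}^2,
\]
and the choice $\alpha:=\epsilon-2C\|\beta\|_\infty$ finishes the argument, with $\alpha>0$ being exactly the hypothesis $\epsilon>2C\|\beta\|_\infty$.

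The hard part is conceptual rather than computational: unlike the continuous divergence-free setting, the discrete convective form $(\beta_x\nabla_{x+}u+\beta_y\nabla_{y+}u,u)_h$ does not vanish, because by Lemma~\ref{lem:integration-by-parts} the forward difference $\nabla_{x+}$ is the negative adjoint of the \emph{backward} difference $\nabla_{x-}$ rather than of itself, so discrete integration by parts leaves a residual. Hence coercivity cannot hold for arbitrary $\beta$; the indefinite convective term can only be absorbed into the diffusive one in the diffusion-dominated regime. Reusing the very same constant $C$ as in the boundedness estimate and tracking the factor $2$ carefully is what makes the threshold $\epsilon>2C\|\beta\|_\infty$ emerge, so the main bookkeeping obstacle is simply ensuring these constants are matched throughout.
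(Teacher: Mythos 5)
Your proof is correct and follows essentially the same route as the paper: Cauchy--Schwarz on the convective term, Poincar\'e's inequality from Lemma~\ref{lem:equivalence} to convert $\|u\|_h$ into $C\|u\|_{\nabla,h}$, and absorption into the exact diffusive contribution $\epsilon\|u\|_{\nabla,h}^2$, yielding $\alpha=\epsilon-2C\|\beta\|_\infty$. The paper's proof is just a terser version of the same argument; your added remark on why the discrete convective term fails to vanish under discrete integration by parts is accurate but not needed for the bound.
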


\begin{proof}
Applying Cauchy-Schwarz and Poincaré's inequality to the convection term, we obtain
\begin{equation}
    \vert (\beta_x\nabla_{x+} u, u)_h + (\beta_y \nabla_{y+} u, u)_h\vert \leq 2C \Vert\beta\Vert_\infty \Vert u\Vert_{\nabla,h}^2,
\end{equation} which implies
\begin{equation}
\begin{aligned}
  b(u, u) \geq \overbrace{\left(- 2C \Vert \beta\Vert_\infty +\epsilon\right)}^{\alpha}\|u\|^2_{\nabla,h}.
\end{aligned}
\end{equation}
\end{proof}

As a result, variational problem \eqref{eq:discweak} is well-posed and thus admits a unique solution in $D_{0,h}$.

Now, invoking the Riesz representation theorem, we have that for each $u \in D_{0,h}$, there exists a unique $r(u) \in D_{0,h}$ such that
\begin{equation}
  (r(u), v)_{\nabla,h} = b(u, v)-l(v), \qquad \forall v \in D_{0,h},
\end{equation} which relates with the norm of the error~$u-u_{\text{EXACT}}$,
where~$u_{\text{EXACT}}$ denotes the solution to the weak problem~\eqref{eq:discweak}, as follows:

\begin{theorem}[Robustness]
Let~$u \in D_{0,h}$ and let~$r(u)\in D_{0,h}$ be its residual representative. Then,
\begin{equation}
    \frac{1}{\mu}\|r(u)\|_{\nabla,h}
    \leq
    \|u - u_{\text{EXACT}}\|_{\nabla,h}
    \leq
    \frac{1}{\alpha}\|r(u)\|_{\nabla,h},
\end{equation}
where $\mu$ and $\alpha$ are the boundedness and coercivity constants of~$b$, respectively.
\end{theorem}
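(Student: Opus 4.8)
The plan is to reduce both inequalities to a single error–residual identity and then apply, respectively, the coercivity and boundedness of $b$ established in Lemmas~\ref{lem:discrete-inf-sup} and~\ref{lem:discrete-boundedness}. First I would set $e := u - u_{\text{EXACT}}$ and exploit that $u_{\text{EXACT}}$ solves the discrete weak problem~\eqref{eq:discweak}, i.e.\ $b(u_{\text{EXACT}}, v) = l(v)$ for all $v \in D_{0,h}$. Subtracting this from the defining relation of the residual representative $r(u)$ yields the key identity
\[
(r(u), v)_{\nabla,h} = b(u,v) - l(v) = b(e, v), \qquad \forall v \in D_{0,h}.
\]
This identity is the entire crux: it transfers information between the Riesz representative $r(u)$ (measured through the $(\cdot,\cdot)_{\nabla,h}$ inner product) and the error $e$ (measured through the bilinear form $b$).

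For the upper bound I would test the identity with $v = e$ and invoke coercivity (Lemma~\ref{lem:discrete-inf-sup}):
\[
\alpha \|e\|_{\nabla,h}^2 \leq b(e,e) = (r(u), e)_{\nabla,h} \leq \|r(u)\|_{\nabla,h}\,\|e\|_{\nabla,h},
\]
where the last step is the Cauchy--Schwarz inequality in $(\cdot,\cdot)_{\nabla,h}$. Dividing by $\|e\|_{\nabla,h}$ (the case $e = 0$ being trivial) gives $\|e\|_{\nabla,h} \leq \tfrac{1}{\alpha}\|r(u)\|_{\nabla,h}$, which is the right-hand inequality of the theorem.

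For the lower bound I would instead test the same identity with $v = r(u)$ and invoke boundedness (Lemma~\ref{lem:discrete-boundedness}):
\[
\|r(u)\|_{\nabla,h}^2 = (r(u), r(u))_{\nabla,h} = b(e, r(u)) \leq \mu \,\|e\|_{\nabla,h}\,\|r(u)\|_{\nabla,h},
\]
and dividing by $\|r(u)\|_{\nabla,h}$ yields $\tfrac{1}{\mu}\|r(u)\|_{\nabla,h} \leq \|e\|_{\nabla,h}$, the left-hand inequality.

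As for difficulty, there is essentially no obstacle once the variational problem is well-posed: everything reduces to the error–residual identity plus two applications of Cauchy--Schwarz. The only points requiring a little care are (i) that $u_{\text{EXACT}}$ exists and is unique, which is already guaranteed by the Lax--Milgram theorem through Lemmas~\ref{lem:discrete-boundedness} and~\ref{lem:discrete-inf-sup}, and (ii) the harmless division by a possibly vanishing norm, which I would handle by treating the degenerate case $e = 0$ separately (note that, by the identity and well-posedness, $e = 0$ if and only if $r(u) = 0$, so both bounds hold trivially there).
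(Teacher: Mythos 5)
Your proposal is correct and is essentially the same argument the paper intends: the paper's proof is a one-line appeal to the boundedness and coercivity constants of Lemmas~\ref{lem:discrete-boundedness} and~\ref{lem:discrete-inf-sup}, and your error--residual identity $(r(u),v)_{\nabla,h}=b(u-u_{\text{EXACT}},v)$ followed by the two Cauchy--Schwarz applications is exactly the standard derivation being invoked. No gaps; your handling of the degenerate case $e=0$ is a reasonable extra precaution that the paper omits.
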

\begin{proof}
It follows immediately from the boundedness and coercivity constants of Lemmas \ref{lem:discrete-boundedness} and \ref{lem:discrete-inf-sup}.
\end{proof}

Built on the upper and lower error control provided by this theorem (robustness), we construct the (robust) loss function as follows:
\begin{equation}
  \text{LOSS}(u) = \|r(u)\|_{\nabla,h}^2 = b(u,r(u))-l(r(u)), \qquad u\in D_{0,h}.
\end{equation} Identifying $r(u)$ with its vector of coefficients $\mathbf{r}(u)\in \mathbb{R}^{(N-1)^2}$, we have that the vector $\text{RES}(u) = \{b(u, r(u)_{i,j})-l(r(u)_{i,j})\}_{0 < i,j< N}$ satisfies the following:
\begin{equation}
    \text{RES}(u) = \mathbf{G}\; \mathbf{r}(u),
\end{equation} where~$\mathbf{G}$ is the Gram matrix of the inner product $(\cdot,\cdot)_{\nabla,h}$. So,
\begin{equation}
\begin{aligned}
  \|r(u)\|_{\nabla,h}^2 &= \mathbf{r}(u)^T\; \mathbf{G}\; \mathbf{r}(u) = \text{RES}(u)^T\; \mathbf{G}^{-1} \; \text{RES}(u).
\end{aligned}
\end{equation}

We will now construct the Gram matrix employing the Kronecker delta test functions as follows:
\begin{eqnarray}
\mathbf{G}_{i,j;k,l}=
&h^{-2}\displaystyle{\left\{\begin{aligned}
&\phantom{-}\,\,4\ \quad \textrm{ for }(i,j)=(k,l) \\
&-1 \quad \textrm{ for }(k,l)\in \{(i+1,j),(i-1,j)\}\\
&-1 \quad \textrm{ for }(k,l)\in\{(i,j+1),(i,j-1)\} 
\end{aligned}
\right.} 
\label{eq:gram}
\end{eqnarray} As a consequence, the Gram matrix of the inner product of $D_{0,h}$ is sparse, and it can be efficiently inverted.

For a neural network $u_{\theta}$ parameterized via the set of trainable parameters $\theta$, by abuse of notation, we will replace $u_\theta$ with $\theta$ in the the argument of LOSS and RES

%
%

\section{Numerical results for the Collocation method for Robust Variational Physics Informed Neural Networks}
\label{sec:PINN2D}
In this section we solve four two-dimensional model problems by using collocation method for RVPINN \cite{c40} method. 

The neural network represents the solution
\begin{equation}
u_{\theta}(x_1,x_2)=NN(x_1,x_2)=A_n \sigma\left(A_{n-1}\sigma(...\sigma(A_1\begin{bmatrix}x_1 \\ x_2 \end{bmatrix}+B_1)...+B_{n-1}\right)+B_n
\end{equation}
where $A_i$ are matrices with weights, $B_i$ are vectors of biases, and $\sigma$ is the activation faction (e.g., the tanh activation function, among alternative possibilities \cite{c20,c33}).

\subsection{Two-dimensional Laplace problem with sin-sin right-hand side}
\label{sec:PINN2D1}

Given $\Omega=(0,1)^2\subset\mathbb R^2$ we seek the solution of the model problem with manufactured solution
\begin{equation}
 -\Delta u = f_1,
\end{equation}
with zero Dirichlet b.c. In this problem we select the solution
\begin{equation}
\label{eq:exact1}
u(x_1,x_2)=sin(2\pi x_1) sin(2 \pi x_2).
\end{equation}
In order to obtain this solution, we employ the manufactured solution technique. Namely, we compute
\begin{eqnarray}
f_1(x_1,x_2)=-\Delta u(x_1,x_2) 
=8 \pi^2 \sin(2 \pi x_1) \sin(2 \pi x_2).
\end{eqnarray}
We define the following residual function
\begin{equation}
\label{eq:RES1}
RES_1(\theta)=\Delta u({\bf x})+f_1({\bf x})
\end{equation}
We enforce the zero Dirichlet b.c. on the NN in a strong way,  following the ideas presented in \cite{c43}.

This time we define the following loss function for CRVPINN
\begin{equation}
LOSS(\theta)=RES_1^T(\theta)\times {\bf G}^{-1}\times RES_1(\theta)
\end{equation}
with $RES_1(\theta)$ defined by (\ref{eq:RES1}) and Gram matrix defined by (\ref{eq:gram}).

{The sparsity pattern of the Gram matrix ${\bf G}$ is presented in Figure \ref{fig:G}. 
For the computing of the CRVPINN loss function, we actually do not need to compute inverse of the matrix ${\bf G}$. The matrix ${\bf G}$ is sparse. We need to solve a system of equations and multiply two vectors
\begin{eqnarray}
\begin{aligned}
 {\bf G} z &= RES_1(\theta) \\
 LOSS(\theta) &= RES_1^T(\theta) z
\end{aligned}
\end{eqnarray}
where we can perform once the LU factorization ${\bf G}={\bf L}{\bf U}$ and use it for a class of computational problems. Then, in every iteration we perform forward and backward substitution which have a linear computational costs ${\cal O}(N)$, namely
\begin{eqnarray}
\begin{aligned}
 {\bf U} z &= RES_1(\theta), \\
 {\bf L} q &= z, \\
 LOSS(\theta) &= RES_1^T(\theta) q.
\end{aligned}
\end{eqnarray}
}

\begin{figure}
 \centering
\includegraphics[width=0.4\textwidth]{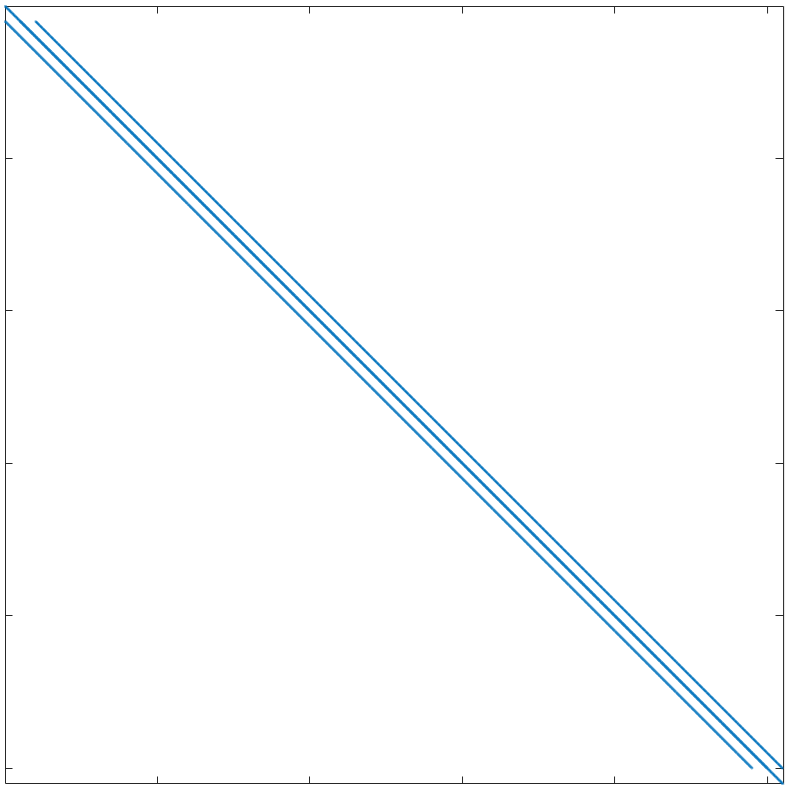}
\caption{Sparsity pattern of the Gram matrix ${\bf G}$ build with $H^1_0$ norm.}
\label{fig:G}
\end{figure}

\begin{figure}[!htb]
  \centering
\includegraphics[width=0.8\textwidth]{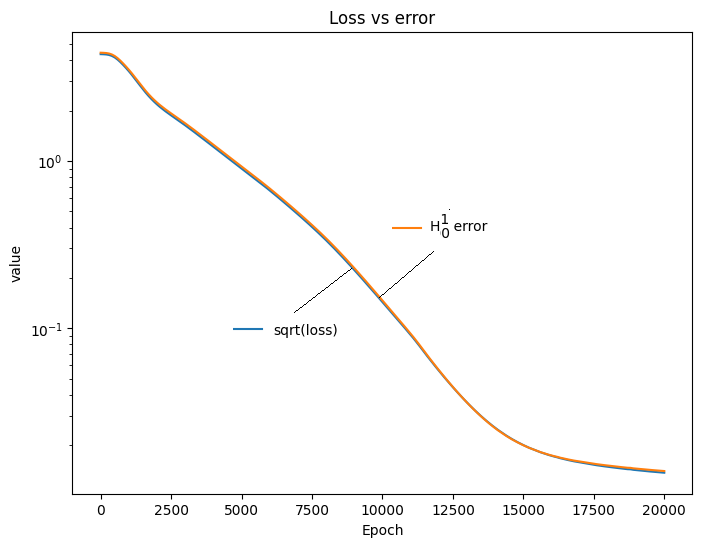}
  \caption{Convergence of CRVPINN and the true error $H^1_0(\Omega_h)$ for the Laplace problem with sin-sin right-hand side.}
  \label{fig:RPINN_Example1_convergence}
\end{figure}

\begin{figure}[!htb]
  \centering
\includegraphics[width=0.8\textwidth]{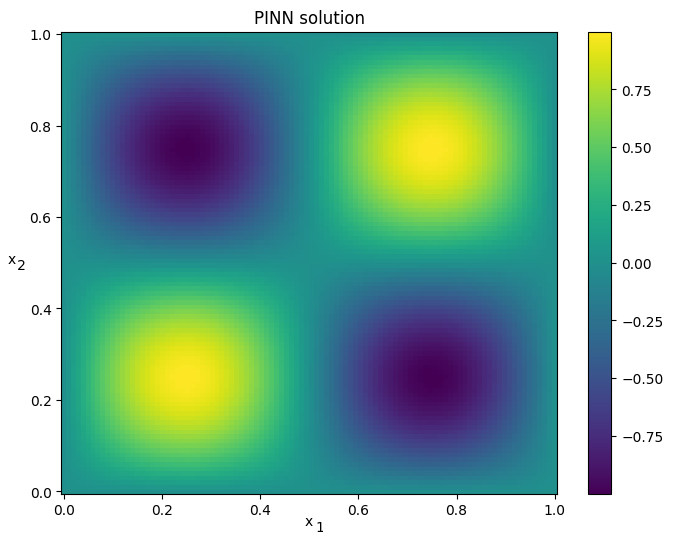}
  \caption{Solution obtained from CRVPINN for the Laplace problem with sin-sin right-hand side.}
  \label{fig:RPINN_Example1_result}
\end{figure}


The convergence of training with ADAM optimizer \cite{c26} is presented in Figure \ref{fig:RPINN_Example1_convergence}. We can see that our loss is robust and equal to the true error computed in (\ref{eq:H1}) norm. This is because for the Laplace problem we have $\mu=\alpha=1$ so $\sqrt{LOSS(\theta)}=\|u_{EXACT}-u_{\theta}\|_{H^1_0(\Omega_h)}$.
The obtained solution is presented in Figure \ref{fig:RPINN_Example1_result}.

\subsection{Two-dimensional Laplace problem with exp-sin right-hand side}
\label{sec:PINN2D2}

Given $\Omega=(0,1)^2\subset\mathbb R^2$ we seek the solution of the model problem with a manufactured solution
\begin{equation}
 -\Delta u = f_2,
\end{equation}
with zero Dirichlet b.c. In this problem, we select the solution
\begin{equation}
\label{eq:exact2}
u(x_1,x_2)=-e^{\pi (x_1-2x_2)}\sin(2\pi x_1)\sin(\pi x_2).
\end{equation}
In order to obtain this solution, we compute
\begin{eqnarray}
f_2(x_1,x_2)=-\Delta u(x_1,x_2) = \nonumber \\ 
= \pi^2 e^{\pi (x - 2 y)} \sin(\pi y) (4 \cos(2 \pi x) - 3 \sin(2 \pi x)) \\
- \pi^2 e^{\pi (x - 2 y)} \sin(2 \pi x) (4 \cos(\pi y) - 3 \sin(\pi y)) \nonumber
\end{eqnarray}

We define the following residual function
\begin{equation}
\label{eq:RES2}
RES_2(\theta)=\Delta u({\bf x})+f_2({\bf x})
\end{equation}
We enforce the zero Dirichlet b.c. on the NN in a strong way,  following the ideas presented in \cite{c43}.

We define the following loss function for CRVPINN
\begin{equation}
LOSS(\theta)=RES_2^T(\theta)\times {\bf G}^{-1}\times RES_2(\theta)
\end{equation}
with $RES_2(\theta)$ defined by (\ref{eq:RES2}) and Gram matrix defined by (\ref{eq:gram}).
{As in the previous example, the inverse of ${\bf G}$ can be replace by LU factorization, and linear cost forward and backward substitutions to obtain a linear computational cost overhead. }
\begin{figure}[!htb]
  \centering
\includegraphics[width=0.8\textwidth]{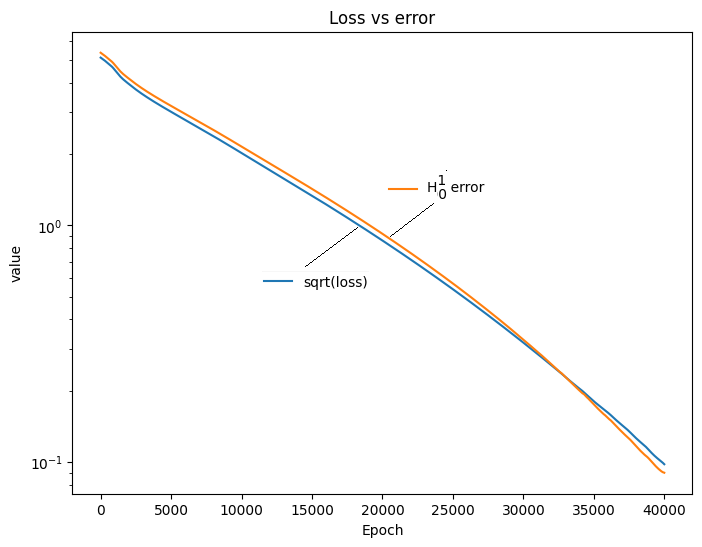}
  \caption{Convergence of CRVPINN and the true error $H^1_0(\Omega_h)$ for the Laplace problem with sin-exp right-hand side.}
  \label{fig:RPINN_Example2_convergence}
\end{figure}

\begin{figure}[!htb]
  \centering
\includegraphics[width=0.8\textwidth]{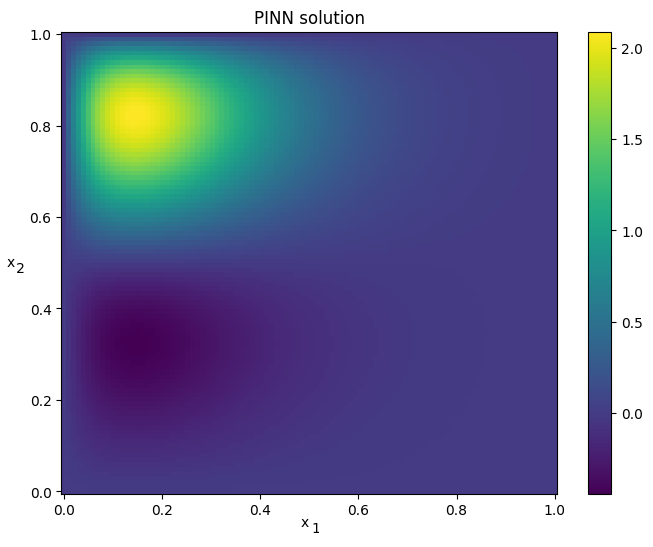}
  \caption{Solution obtained from CRVPINN for the Laplace problem with sin-exp right-hand side.}
  \label{fig:RPINN_Example2_result}
\end{figure}


The convergence of training with ADAM optimizer \cite{c26} is presented in Figure \ref{fig:RPINN_Example2_convergence}. We can see that our loss is robust and equal to the true error computed in (\ref{eq:H1}) norm. Again, for the Laplace problem  $\mu=\alpha=1$ and $\sqrt{LOSS(\theta)}=\|u_{EXACT}-u_{\theta}\|_{H^1_0(\Omega_h)}$.
The obtained solution is presented in Figure \ref{fig:RPINN_Example2_result}.

\subsection{Two-dimensional advection-diffusion problem}
\label{sec:PINN2D3}

Given $\Omega=(0,1)^2\subset\mathbb R^2$ we seek the solution $\Omega \ni (x_1,x_2) \rightarrow u(x_1,x_2)\in {\mathbb R}$ of the Eriksson-Johnson model problem \cite{c9}, a challenging model problem designed for verification of the numerical methods.

 \begin{equation}
 \left\{
 \begin{array}{rl}
 \beta \cdot \nabla u - \epsilon \Delta u = 0 & \hbox{in }\Omega\\
 u=g & \hbox{over }\partial\Omega\,,
 \end{array}\right.
 \label{eq:EJ2D}
 \end{equation}
 with $\beta=(1,0)$, $\epsilon=0.1$, 
with $g$ such that 
\begin{eqnarray}
g(0, x_2) = \sin \left(\pi x_2\right) \textrm{ for } x_2\in (0,1) \\
g(1, x_2) = 0 \textrm{ for } x_2\in (0,1) 
 \\
g(x_1, 0) = 0 \textrm{ for } x_1\in (0,1)  \\
g(x_1, 1) = 0 \textrm{ for } x_1\in (0,1) 
\end{eqnarray}

We define the shift $u_{shift}$ such that
 \begin{eqnarray}
 u(x_1,x_2) = u_0(x_1,x_2)+u_{shift}(x_1,x_2), \\ u_{shift}(x_1,x_2) = (1-x_1)\sin(\pi x_2)
 \label{eq:shift}
 \end{eqnarray}

We notice that $u_0(x_1,x_2)=u(x_1,x_2)-u_{shift}(x_1,x_2)=0$ for $(x_1,x_2) \in \partial \Omega$.
Using the shift technique, we can transform our problem to homogenous zero Dirichlet b.c. problem: we seek $\Omega \ni (x_1,x_2) \rightarrow u_0(x_1,x_2)\in {\mathbb R}$, such that

 \begin{equation}
 \left\{
 \begin{array}{rl}
 \beta \cdot \nabla u_0 - \epsilon \Delta u_0 = -\beta \cdot \nabla u_{shift} + \epsilon \Delta u_{shift}  & \hbox{in }\Omega\\
 u=0 & \hbox{over }\partial\Omega\,,
 \end{array}\right.
 \label{eq:EJ2Da}
 \end{equation}

We define the following residual function
\begin{eqnarray}
\label{eq:RES3}
\begin{aligned}
&RES_3(\theta)= \\ & \beta \cdot \nabla u_0({\bf x}) - \epsilon \Delta u_0({\bf x}) + \beta \cdot \nabla u_{shift}({\bf x}) - \epsilon \Delta u_{shift}({\bf x}) 
\end{aligned}
\end{eqnarray}
We enforce the zero Dirichlet b.c. on the NN in a strong way,  following the ideas presented in \cite{c43}.
To estimate the true error, we use the exact solution formula from \cite{c5}
\begin{eqnarray}
\label{eq:exact3}
u_{exact}(x, y) = \frac{(e^{(r_1 (x-1))} - e^{(r_2 (x-1))})}{ (e^{(-r_1)} - e^{(-r_2)})}  \sin(\pi  y), \\ r_1 = \frac{(1 + \sqrt{(1 + 4\epsilon^2\pi^2)})}{ (2\epsilon)}, r_2 = \frac{(1 - \sqrt{(1 + 4\epsilon^2\pi^2)})}{ (2\epsilon)}.
\end{eqnarray}

We define the following loss function for CRVPINN
\begin{equation}
LOSS(\theta)=RES_3(\theta)^T\times {\bf G}^{-1}\times RES_3(\theta)
\end{equation}
with $RES_3(\theta)$ defined by (\ref{eq:RES3}) and Gram matrix defined by (\ref{eq:gram}).
{As in the previous examples, the computational cost of the CRVPINN loss computations is equal to the computational cost of PINN loss computations. }

\begin{figure}[!htb]
  \centering
\includegraphics[width=0.8\textwidth]{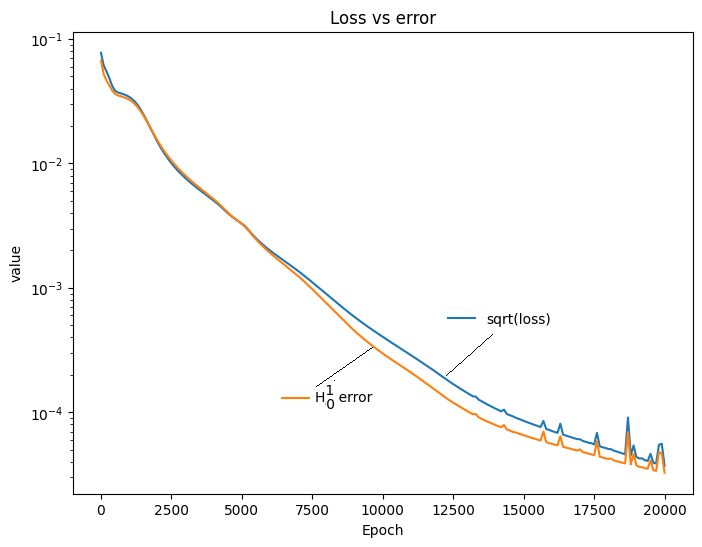}
  \caption{Convergence of CRVPINN and the true error $\epsilon H^1_0(\Omega_h)$.}
  \label{fig:RPINN_Example3_convergence}
\end{figure}

\begin{figure}[!htb]
  \centering
\includegraphics[width=0.8\textwidth]{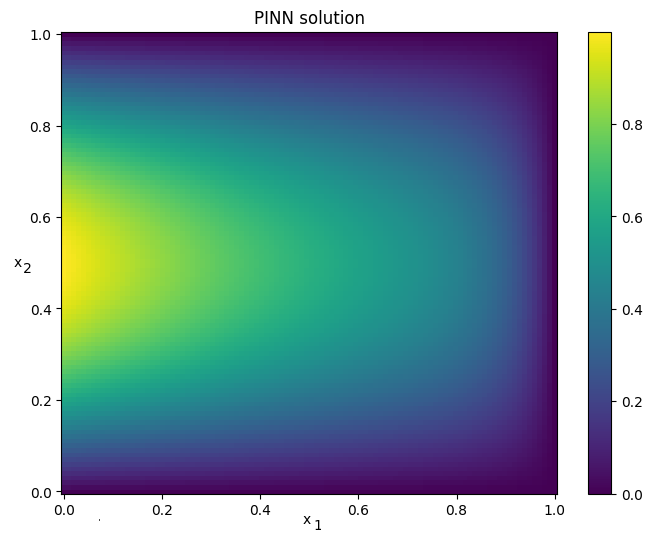}
  \caption{Solution obtained from CRVPINN for the advection-diffusion problem.}
  \label{fig:RPINN_Example3_result}
\end{figure}


The convergence of training with ADAM optimizer \cite{c26} is presented in Figure \ref{fig:RPINN_Example3_convergence}. 
For the advection-diffsion, $\mu=(\epsilon+2C)=(0.1+2\times 2)=4.1$, and $\alpha=\epsilon=0.1$. So we have $\frac{1}{4.1}\sqrt{LOSS(\theta)}\leq \|u_{EXACT}-u_{\theta}\|_{  H^1_0(\Omega_h)}\leq \frac{1}{0.1}\sqrt{LOSS(\theta)}$. Multiplying by $\epsilon=0.1$, we have  
$\frac{1}{41}\sqrt{LOSS(\theta)}\leq 0.1\times \|u_{EXACT}-u_{\theta}\|_{  H^1_0(\Omega_h)}\leq \sqrt{LOSS(\theta)}$. 
This implies the agreement of the plots if we measure the error in $\epsilon H^1_0(\Omega_h)$ norm. The robust loss function and the true error are close to each other.
The obtain solution is presented in Figure \ref{fig:RPINN_Example3_result}.

\subsection{Poisson problem with varying diffusion function}

Given $\Omega=(0,1)^2\subset\mathbb R^2$ we seek the solution of the model problem with a manufactured solution
\begin{equation}
 \nabla \cdot \left( \epsilon \left(x_2\right)  \nabla u \right) = f_4
\end{equation}
with zero Dirichlet b.c. In this problem, we select the solution
\begin{equation}
\label{eq:exact2}
u(x_1,x_2)=\sin(2\pi x_1)\sin(\pi x_2).
\end{equation}
In order to obtain this solution, we compute
\begin{eqnarray}
f_4(x_1,x_2)=
\pi \sin(\pi x_1) [ \cos(\pi x_2) d \epsilon(x_2)/(dx_2) - 2 \pi \epsilon(x_2) \sin(\pi x_2) ]
\end{eqnarray}
We assume $\epsilon(x_2)= 2 ( x_2+1 )$.
We define the following residual function
\begin{equation}
\label{eq:RES4}
RES_4(\theta)=\Delta u({\bf x})+f_4({\bf x})
\end{equation}
We enforce the zero Dirichlet b.c. on the NN in a strong way,  following the ideas presented in \cite{c43}.

The Gram matrix ${\bf G}$ is now constructed using varying $\epsilon$ values
\begin{equation}
\label{eq:varygram}
    \hat{\bf G}_{ \zeta_1 , \zeta_2 } = h ( \epsilon \nabla \delta_{ij} , \delta_{kl} ) = \begin{cases}
        2 \epsilon_{i,j}+ \epsilon_{i-1,j} + \epsilon_{i,j-1} & (k,l)=(i,j) \\
        - \epsilon_{i-1,j} & (k,l)=(i-1,j) \\
        -\epsilon_{i,j} & (k,l) = (i+1,j) \\
        -\epsilon_{i,j} & (k,l) = (i,j+1) \\
        -\epsilon_{i,j-1} & (k,l)=(i,j-1)
    \end{cases}
\end{equation}
where $\zeta_1$ is mapped into $(i,j)$ and $\zeta_2$ is mapped into $(k,l)$. 

We define the following loss function for CRVPINN
\begin{equation}
LOSS(\theta)=RES_4(\theta)^T\times \hat{{\bf G}}^{-1}\times RES_4(\theta)
\end{equation}
with $RES_4(\theta)$ defined by (\ref{eq:RES4}) and Gram matrix defined by (\ref{eq:varygram}).

\begin{figure}[!htb]
  \centering
\includegraphics[width=0.8\textwidth]{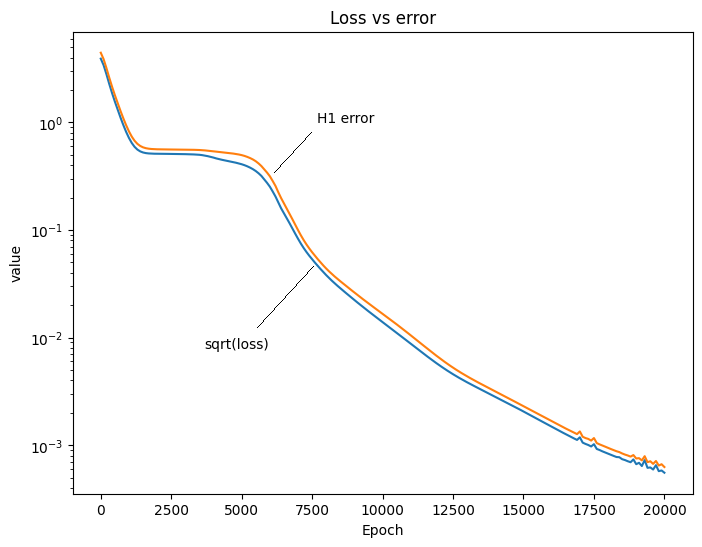}
  \caption{Convergence of CRVPINN and the true error $H^1_0(\Omega_h)$ for the Poisson problem with variable diffusion.}
  \label{fig:RPINN_Example4_convergence}
\end{figure}

\begin{figure}[!htb]
  \centering
\includegraphics[width=0.8\textwidth]{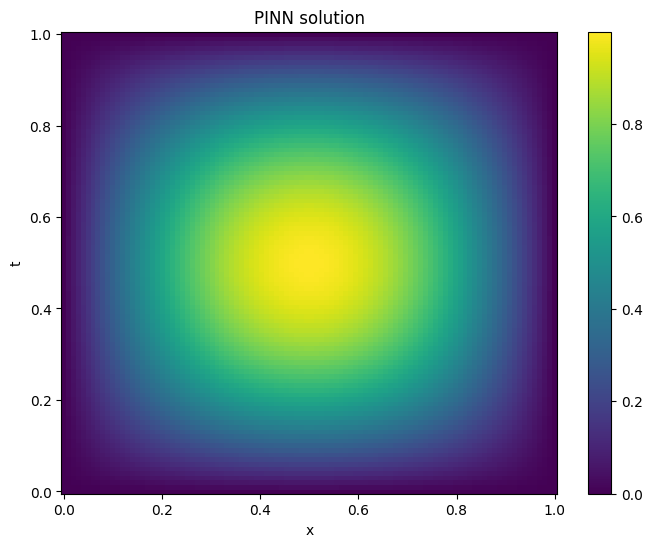}
  \caption{Solution obtained from CRVPINN for the Poisson problem with variable diffusion.}
  \label{fig:RPINN_Example4_result}
\end{figure}


\subsection{Poisson problem with a jump}

Given $\Omega=(0,1)^2\subset\mathbb R^2$ we seek the solution of the model problem with a manufactured solution
\begin{equation}
 \Delta u = f_5
\end{equation}
with zero Dirichlet b.c. In this problem, we select the solution
\begin{equation}
\label{eq:exact2}
u(x_1,x_2)=(0.45 \tanh(100(x_2-0.5)) + 0.55) \sin(\pi x_1) \sin( \pi x_2).
\end{equation}
In order to obtain this solution, we compute
\begin{eqnarray}
f_5(x_1,x_2)=
\sin(\pi x_1) (\sin(\pi x_2) \notag \\ (-10.8566 - 8.88264 \tanh(100 (-0.5 + x_2))) +  \frac{1}{\left(\cosh(100 (-0.5 + x_2))\right)^2}* \notag \\ *(282.743 \cos(\pi x_2) - 9000 \sin(\pi x_2)  \tanh(100 (-0.5 + x_2))))
\end{eqnarray}
We define the following residual function
\begin{equation}
\label{eq:RES4}
RES_5(\theta)=\Delta u({\bf x})+f_5({\bf x})
\end{equation}
We enforce the zero Dirichlet b.c. on the NN in a strong way,  following the ideas presented in \cite{c43}.

\begin{figure}[!htb]
  \centering
\includegraphics[width=0.8\textwidth]{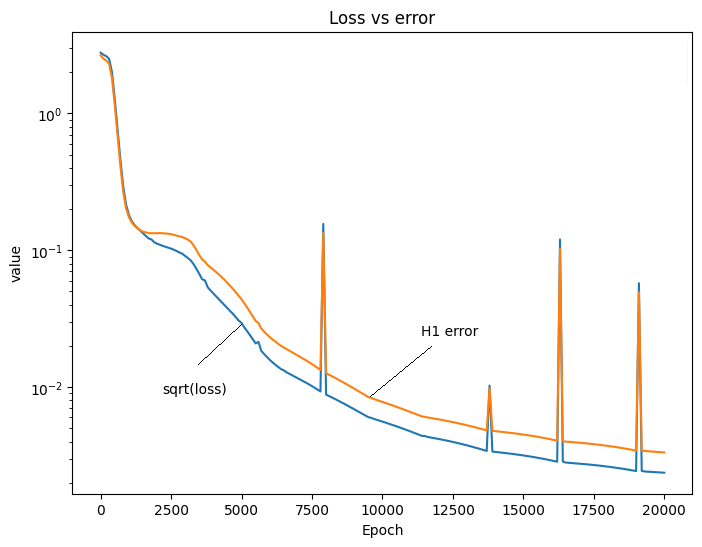}
  \caption{Convergence of CRVPINN and the true error $H^1_0(\Omega_h)$ for the Poisson problem with a jump.}
  \label{fig:RPINN_Example4_convergence}
\end{figure}

\begin{figure}[!htb]
\centering
\includegraphics[width=0.8\textwidth]{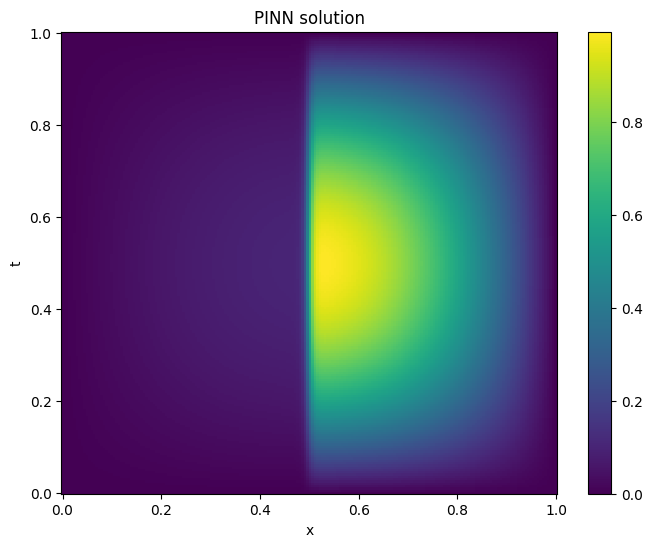}
  \caption{Solution obtained from CRVPINN for the Poisson problem with a jump.}
  \label{fig:RPINN_Example4_result}
\end{figure}


\subsection{Stokes problem with manufactured solution}

\renewcommand{\L}{\mathcal{L}}
\newcommand{\V}[1]{v\left(\bm{\lambda}^{#1}\right)}
\newcommand{\Usol}{u(\bm{\theta})}
\newcommand{\U}{\mathbf{u}}
\newcommand{\VV}{\mathbf{v}}
\newcommand{\F}{\mathbf{f}}
\newcommand{\X}{\mathbf{x}}
\newcommand{\Dpar}[2]{\frac{\partial {#1}}{\partial {#2}}}
\newcommand{\Dt}{\partial_t}

Given $\Omega=(0,1)^2\subset\mathbb R^2$ we seek the solution of the model problem with a manufactured solution: Find velocity and pressure $(u_1,u_2,p)$ such that
\begin{equation}\label{eq:strongStokes}
\begin{aligned}    
-\Delta\U+\nabla p = \F \text{ in }\Omega\\
\nabla\cdot\U= 0 \text{ in }\Omega\\
\U = {\bf g} \text{ in } \Gamma
\end{aligned}
\end{equation}
In this problem, we select the solution
\begin{equation}
\begin{aligned}    
\label{eq:exactStokes}
u_1(x_1,x_2) =& 2 e^x_1 ( - 1 + x_1)^2 x_1^2 (x_2^2 + x_2) (-1 + 2x_2), \\
u_2(x_1,x_2) =& -e^x_1  ( - 1 + x_1)  x_1  ( -2 + x_1 * (3+x_1)) (-1 + x_2)^2 x_2^2, \\
p(x_1,x_2)  =&
(-424 + 156 \cdot 2.718 + (x_2^2 - x_2) (-456 +   \\ & e^x_1(456 + x_1^2  (228 - 5(x_2^2-x_2)) 
          + 2x_1  (-228 + (x_2^2-x_2)) +\\  & 2 x_1^3 (-36 + (x_2^2-x_2)) + x_1^4 *( 12 + (x_2^2-x_2)) ) )),
\end{aligned}    
\end{equation}
and we define ${\bf g}(x_1,x_2)$ and ${\bf f}(x_1,x_2)$ accordingly, namely
\begin{equation}
\begin{aligned}    \label{rhsStokes}
f_1(x_1,x_2)&=-\frac{\partial^2 u_1(x_1,x_2)}{\partial x_1^2}-\frac{\partial^2 u_1(x_1,x_2)}{\partial x_2^2}+\frac{\partial p}{\partial x_1}, \\
f_2(x_1,x_2)&=-\frac{\partial^2 u_2(x_1,x_2)}{\partial x_1^2}-\frac{\partial^2 u_2(x_1,x_2)}{\partial x_2^2}+\frac{\partial p}{\partial x_2}, 
\end{aligned}    
\end{equation}
and 
\begin{equation}
\begin{aligned}    \label{gStokes}
g_1(x_1,x_2)&=u_1(x_1,x_2), \quad (x_1,x_2)\in \partial \Omega \\
g_2(x_1,x_2)&=u_2(x_1,x_2), \quad (x_1,x_2)\in \partial \Omega. 
\end{aligned}    
\end{equation}
We begin by transforming the Stokes equations into a first-order system:
\begin{equation}
\begin{aligned} \label{eq:stokes-first-order}
    - \nabla \cdot \Vsigma + \nabla p &= f \\
    \nabla \cdot \U &= 0 \\
   \Vsigma - \nabla \U &= 0
\end{aligned}
\end{equation}
where
\begin{equation*}
  \Vsigma = \begin{bmatrix}
      w_1 & w_2 \\
      z_1 & z_2
  \end{bmatrix},
  \quad
  \nabla \cdot \Vsigma = \begin{bmatrix}
      \deriv{w_1}{x} + \deriv{w_2}{y} \\
      \deriv{z_1}{x} + \deriv{z_2}{y}
  \end{bmatrix},
  \quad
  \nabla \U = \begin{bmatrix}
      \deriv{u_1}{x} & \deriv{u_1}{y} \\
      \deriv{u_2}{x} & \deriv{u_2}{y} \\
  \end{bmatrix}
\end{equation*}
This can be compactly written as~$Au = (f, 0, 0)$,
where
\begin{equation*}
    Au = \left( - \nabla \cdot \Vsigma + \nabla p,
    \nabla \cdot \U,
    \Vsigma - \nabla \U\right)
\end{equation*}
and~$u = (\Vsigma, \U, p)$ is a group variable.
A corresponding (continuous) variational formulation can be obtained
by testing this equality with~$v = (\Vtau, \VV, q)$:
\begin{equation}
  \Prod{Au}{v} = \Prod{-\nabla \cdot \Vsigma + \nabla p}{\VV} 
  + \Prod{\nabla\cdot\U}{q}
  + \Prod{\Vsigma - \nabla\U}{\Vtau}
  = \Prod{f}{\VV}
\end{equation}
Following \cite{c39}
we choose the following \emph{adjoint grapn norm}:
\begin{equation}
\begin{aligned}
\label{eq:stokes-norm}
  \Norm[\text{graph}]{(\boldsymbol{\tau}, \VV, q)}^2 &= \Norm{\nabla \cdot \boldsymbol{\tau} - \nabla q}^2
  + \Norm{\nabla \cdot \VV}^2 + \Norm{\boldsymbol{\tau} + \nabla \VV}^2 
  \\&+ \Norm{\boldsymbol{\tau}}^2 + \Norm{\VV}^2 + \Norm{q}^2
\end{aligned}
\end{equation}
for our test space, since then we can prove that the bilinear form~$b(u, v) := \Prod{Au}{v}$
satisfies the inf-sup condition (see Appendix in \cite{c39}).

\subsubsection{Discrete Stokes formulation}

The discrete equivalent of the above first order system can be constructed
by replacing the nabla operator with its discrete equivalents, $\nabla_{+}$
or~$\nabla_{-}$.
%
\begin{equation}
\begin{aligned} \label{eq:stokes-first-order}
    - \nabla_{+} \cdot \Vsigma + \nabla_{+} p &= f \\
    \nabla_{-} \cdot \U &= 0 \\
   \Vsigma - \nabla_{-} \U &= 0
\end{aligned}
\end{equation}

To ensure well-posedness of the above systems, we need to correctly
define the domain of its operator. Let
\begin{equation}
\begin{aligned}
  D^p_h &= \left\{p \in D_h : p|_{\Gamma_p} = 0, \Prod[h]{p}{1} = 0\right\} \\
  D^{\sigma}_h &= \left\{\Vsigma \in D_h^4 \colon \sigma_{ij}|_{\Gamma_\sigma^{j}} = 0 \right\} \\
\end{aligned}
\end{equation}
where
\begin{equation}
\begin{aligned}
    \Gamma_p &= \left\{ (0, jh) : 0 \leq j \leq N \right\}
    \cup \left\{ (ih, 0) : 0 \leq i \leq N \right\}
    \cup \{(1, 1)\} \subset \partial \Omega_h 
    \\ 
    \Gamma_\sigma^{1} &= \left\{(0, jh) : 0 \leq j \leq N \right\} 
    \\
    \Gamma_\sigma^{2} &= \left\{(ih, 0) : 0 \leq i \leq N \right\}  \\ 
\end{aligned}
\end{equation}
A corresponding (discrete) weak formulation can be obtained
by testing this equality with ~$v = (\Vtau, \VV, q)$.
We are looking for $u = (\Vsigma, \U, p) \in D^\sigma_h \times D_{0,h}^2 \times D^p_h$, such that for all~$v = (\Vtau, \VV, q) \in D^\sigma_h \times D_{0,h}^2 \times D^p_h$
\begin{equation}
  \Prod[h]{Au}{v} = \Prod[h]{-\nabla_{+} \cdot \Vsigma + \nabla_{+} p}{\VV} 
  + \Prod[h]{\nabla_{-}\cdot\U}{q}
  + \Prod[h]{\Vsigma - \nabla_{-}\U}{\Vtau}
  = \Prod[h]{f}{\VV}
\end{equation}
We consider the above formulation with test and trial spaces
consisting of the same functions:
\begin{equation}
   U = V = D^\sigma_h \times D_{0,h}^2 \times D^p_h,
\end{equation}
but with different norms:
\begin{equation}
\begin{aligned}
    \Norm[U]{u}^2 &= \Norm[h]{\Vsigma}^2 + \Norm[h]{\U}^2 + \Norm[h]{p}^2 \\
    \Norm[V]{v}^2 &= \Norm{\nabla \cdot \Vtau_{+} - \nabla_{+} q}^2
  + \Norm{\nabla_{-} \cdot \VV}^2 + \Norm{\Vtau + \nabla_{-} \VV}^2 
    \\&+ \Norm[h]{\Vtau}^2 + \Norm[h]{\VV}^2 + \Norm[h]{q}^2
\end{aligned}
\end{equation}

The corresponding discrete scalar product is
\begin{equation}
\begin{aligned}
\label{eq:stokes-scalar-product}
(u, v)_{\text{graph}} &= \Prod[h]{\nabla_{+} \cdot \Vsigma - \nabla_{+} p}{\nabla_{+} \cdot \Vtau - \nabla_{+} q}
+ \Prod[h]{\nabla_{-} \cdot \U}{\nabla_{-} \cdot \VV} \\&+
  \Prod[h]{\Vsigma + \nabla_{-} \U}{\Vtau + \nabla_{-} \VV}
  + \Prod[h]{\Vsigma}{\Vtau} + \Prod[h]{\U}{\VV} + \Prod[h]{p}{q} 
  \\&=
  \Prod[h]{\nabla_{+}\cdot \Vsigma}{\nabla_{+} \cdot \Vtau} + 2\Prod[h]{\Vsigma}{\Vtau}
  \\&+
  \Prod[h]{\nabla_{-}\cdot \U}{\nabla_{-} \cdot \VV} + \Prod[h]{\nabla_{-} \U}{\nabla_{-} \VV} + \Prod[h]{\U}{\VV}
  \\&+
  \Prod[h]{\nabla_{+} p}{\nabla_{+} q} + \Prod[h]{p}{q}
  \\&+
  \Prod[h]{\nabla_{-} \U}{\Vtau} + \Prod[h]{\Vsigma}{\nabla_{-} \VV}
  \\&+
  \Prod[h]{-\nabla_{+} p}{\nabla_{+} \cdot \Vtau} + \Prod[h]{\nabla_{+}\cdot\Vsigma}{-\nabla_{+} q}
\end{aligned}
\end{equation}
Its Gram matrix has a block structure
\begin{equation*}
    G = \begin{bmatrix}
        G_{\Vsigma} & G_{\Vsigma \U} & G_{\Vsigma p} \\
        G_{\Vsigma \U}^T & G_\U & 0 \\
        G_{\Vsigma p}^T & 0 & G_p
    \end{bmatrix}
\end{equation*}
where~$G_{\Vsigma}$, $G_\U$, $G_p$, $G_{\Vsigma \U}$, $G_{\Vsigma p}$
are matrices of the following bilinear forms, corresponding to terms of~\eqref{eq:stokes-scalar-product}:
\begin{equation}
\begin{aligned}
g_{\Vsigma}(\Vsigma, \Vtau) &=  \Prod[h]{\nabla_{+}\cdot \Vsigma}{\nabla_{+} \cdot \Vtau} + 2\Prod[h]{\Vsigma}{\Vtau} \\
g_{\U}(\U, \VV) &= \Prod[h]{\nabla_{-}\cdot \U}{\nabla_{-} \cdot \VV} + \Prod[h]{\nabla_{-} \U}{\nabla_{-} \VV} + \Prod[h]{\U}{\VV} \\
g_p(p, q) &=  \Prod[h]{\nabla_{+} p}{\nabla_{+} q} + \Prod[h]{p}{q} \\
g_{\Vsigma \U}(\U, \Vtau) &= \Prod[h]{\nabla_{-}\U}{\Vtau} \\
g_{\Vsigma p}(p, \Vtau) &= \Prod[h]{-\nabla_{+} p}{\nabla_{+}\cdot\Vtau}
\end{aligned}
\end{equation}
Since~$\Vsigma$ and~$\U$ are vectors (tensors), apart from~$G_p$, all the above matrices
can be further decomposed into blocks.
To simplify the presentation, let us introduce the following building blocks -- matrices
of fundamental bilinear forms defined on pairs of scalar functions:
\begin{equation}
\begin{gathered}
  M \sim \Prod[h]{f}{g}, \quad K_{\pm} \sim \Prod[h]{\nabla_{\pm} f}{\nabla_{\pm} g}, \quad
  S_{\pm} \sim \Prod[h]{\nabla_{x\pm} f}{\nabla_{y\pm} g} \\
  K^x_{\pm} \sim \Prod[h]{\nabla_{x\pm} f}{\nabla_{x\pm} g}, \quad K^y_{\pm} \sim \Prod[h]{\nabla_{y\pm} f}{\nabla_{y\pm} g}. \\
  A^x_{\pm} \sim \Prod[h]{\nabla_{x\pm} f}{g}, \quad A^y_{\pm} \sim \Prod[h]{\nabla_{y\pm} f}{g}.
\end{gathered}
\end{equation}
Using this notation, $G_p$ can be expressed as~$G_p = K + M$.
For~$G_{\Vsigma}$, expanding the definition with
\begin{equation*}
  \Vsigma = \begin{bmatrix}
      \sigma_{11} & \sigma_{12} \\
      \sigma_{21} & \sigma_{22}
  \end{bmatrix},
  \quad
  \Vtau = \begin{bmatrix}
      \tau_{11} & \tau_{12} \\
      \tau_{21} & \tau_{22}
  \end{bmatrix},
\end{equation*}
we get
\begin{equation*}
\begin{aligned}
    g_{\Vsigma}(\Vsigma, \Vtau) &=  \Prod[h]{\nabla_{+}\cdot \Vsigma}{\nabla_{+} \cdot \Vtau} + 2\Prod[h]{\Vsigma}{\Vtau} 
    \\&= \Prod[h]{
    \begin{bmatrix}
        \nabla_{x+} \sigma_{11} + \nabla_{y+} \sigma_{12} \\
        \nabla_{x+} \sigma_{21} + \nabla_{y+} \sigma_{22} \\
    \end{bmatrix}
    }{
        \begin{bmatrix}
        \nabla_{x+} \tau_{11} + \nabla_{y+} \tau_{12} \\
        \nabla_{x+} \tau_{21} + \nabla_{y+} \tau_{22} \\
    \end{bmatrix}
    }
    \\&
    + 2 \Prod[h]{\sigma_{11}}{\tau_{11}}
    + 2 \Prod[h]{\sigma_{12}}{\tau_{12}}
    + 2 \Prod[h]{\sigma_{21}}{\tau_{21}}
    + 2 \Prod[h]{\sigma_{22}}{\tau_{22}}
    \\&=
    \Prod[h]{\nabla_x \sigma_{11}}{\nabla_{x+} \tau_{11}}
    +
    \Prod[h]{ \nabla_{y+} \sigma_{12}}{\nabla_{x+} \tau_{11}}
    \\&+
    \Prod[h]{\nabla_{x+} \sigma_{11}}{\nabla_{y+} \tau_{12}}
    +
    \Prod[h]{\nabla_{y+} \sigma_{12}}{\nabla_{y+} \tau_{12}}
    \\&+
    \Prod[h]{\nabla_{x+} \sigma_{21}}{\nabla_{x+} \tau_{21}}
    +
    \Prod[h]{ \nabla_{y+} \sigma_{22}}{\nabla_{x+} \tau_{21}}
    \\&+
    \Prod[h]{\nabla_{x+} \sigma_{21}}{\nabla_{y+} \tau_{22}}
    +
    \Prod[h]{\nabla_{y+} \sigma_{22}}{\nabla_{y+} \tau_{22}}
    \\&
    + 2 \Prod[h]{\sigma_{11}}{\tau_{11}}
    + 2 \Prod[h]{\sigma_{12}}{\tau_{12}}
    + 2 \Prod[h]{\sigma_{21}}{\tau_{21}}
    + 2 \Prod[h]{\sigma_{22}}{\tau_{22}}
\end{aligned}
\end{equation*}
which, assuming the components are ordered as~$\sigma_{11}$, $\sigma_{12}$, $\sigma_{21}$, $\sigma_{22}$,
gives us the block structure
\begin{equation}
  G_{\Vsigma} = \begin{bmatrix}
    K^x_{+} & S^T_{+} & 0 & 0 \\
    S_{+} & K^y_{+} & 0 & 0 \\
    0 & 0 & K^x_{+} & S^T_{+} \\
    0 & 0 & S_{+} & K^y_{+} \\
  \end{bmatrix}
  + 2 \begin{bmatrix}
  M & 0 & 0 & 0 \\
  0 & M & 0 & 0\\
  0 & 0 & M & 0 \\
  0 & 0 & 0 & M
  \end{bmatrix}
\end{equation}
For~$g_{\U}$, we have
\begin{equation*}
\begin{aligned}
  g_{\U}(\U, \VV) &= \Prod[h]{\nabla_{-}\cdot \U}{\nabla_{-} \cdot \VV} + \Prod[h]{\nabla_{-} \U}{\nabla_{-} \VV} + \Prod[h]{\U}{\VV}
  \\&= \Prod[h]{\nabla_{x-} u_1 + \nabla_{y-} u_2}{\nabla_{x-} v_1 + \nabla_{y-} v_2} \\&+
  \Prod[h]{\nabla_{x-} u_1}{\nabla_{x-} v_1} + 
  \Prod[h]{\nabla_{y-} u_1}{\nabla_{y-} v_1} \\ &+
  \Prod[h]{\nabla_{x-} u_2}{\nabla_{x-} v_2} +
  \Prod[h]{\nabla_{y-} u_2}{\nabla_{y-} v_2}
  \\&+
  \Prod[h]{u_1}{v_1} +\Prod[h]{u_2}{v_2} \\&=
  2\Prod[h]{\nabla_{x-} u_1}{\nabla_{x-} v_1} + \Prod[h]{\nabla_{y-} u_1}{\nabla_{y-} v_1} + \Prod[h]{u_1}{v_1} \\&+
  \Prod[h]{\nabla_{x-} u_2}{\nabla_{x-} v_2} + 2\Prod[h]{\nabla_{y-} u_2}{\nabla_{y-} v_2} + \Prod[h]{u_2}{v_2} \\&+
  \Prod[h]{\nabla_{x-} u_1}{\nabla_{y-} v_2} \\&+ 
  \Prod[h]{\nabla_{y-} u_2}{\nabla_{x-} v_1}
\end{aligned}
\end{equation*}
which gives us the block structure
\begin{equation}
  G_{\U} = \begin{bmatrix}
    2K^x_{-} + K^y_{-} & S^T_{-} \\
    S_{-} & K^x_{-} + 2K^y_{-} \\
  \end{bmatrix}
  +
  \begin{bmatrix}
      M & 0 \\
      0 & M
  \end{bmatrix}
\end{equation}
For~$g_{\Vsigma \U}$, we get
\begin{equation*}
\begin{aligned}
    g_{\Vsigma \U}(\U, \Vtau) &= \Prod[h]{\nabla_{-}\U}{\Vtau}
    \\&=
    \Prod[h]{\nabla_{x-} u_1}{\tau_{11}} +
    \Prod[h]{\nabla_{y-} u_1}{\tau_{12}} +
    \Prod[h]{\nabla_{x-} u_2}{\tau_{21}} +
    \Prod[h]{\nabla_{y-} u_2}{\tau_{22}}
\end{aligned}
\end{equation*}
which gives us the block structure
\begin{equation}
  G_{\Vsigma \U} = \begin{bmatrix}
    A^x_{-} & 0 \\
    A^y_{-} & 0 \\
    0 & A^x_{-} \\
    0 & A^y_{-}
  \end{bmatrix}
\end{equation}
Finally, for~$g_{\Vsigma p}$, we get
\begin{equation*}
\begin{aligned}
  g_{\Vsigma p}(p, \Vtau) &= \Prod[h]{-\nabla_{+} p}{\nabla_{+}\cdot\Vtau}
  \\&= -\Prod[h]{\nabla_{x+} p}{\nabla_{x+} \tau_{11} + \nabla_{y+} \tau_{12}}
  - \Prod[h]{\nabla_{y+} p}{\nabla_{x+} \tau_{21} + \nabla_{y+} \tau_{22}}
  \\&=
  -\Prod[h]{\nabla_{x+} p}{\nabla_{x+} \tau_{11}} - \Prod[h]{\nabla_{x+} p}{\nabla_{y+} \tau_{12}}
  \\&
  \phantom{=}-\Prod[h]{\nabla_{y+} p}{\nabla_{x+} \tau_{21}} - \Prod[h]{\nabla_{y+} p}{\nabla_{y+} \tau_{22}}
\end{aligned}
\end{equation*}
\begin{equation}
  G_{\Vsigma p} = -\begin{bmatrix}
     K^x_{+} \\ S_{+} \\ S^T_{+} \\ K^y_{+}
  \end{bmatrix}
\end{equation}

\subsubsection{Stability of discrete Stokes formulation}

Stability of our discrete formulation can be investigated by establishing bounds
on the continuity and \emph{inf-sup} constants of operator~$A$,
i.e. by showing existence of~$0 <\alpha, \mu$ such that
\begin{equation}
\alpha \Norm[V]{v} 
\leq
\sup_{\substack{u \in U\\ u \neq 0}}\frac{\Prod[h]{Au}{v}}{\Norm[U]{u}}
\leq
\mu \Norm[V]{v} \quad \forall v \in V
\end{equation}
Since
\begin{equation*}
\begin{aligned}
  \Prod[h]{Au}{v} = \Prod[h]{u}{A^*v} &\leq \Norm[h]{u}\Norm[h]{A^*v}
  \\&\leq \Norm[U]{u}\sqrt{\Norm[h]{v}^2 + \Norm[h]{A^*v}^2}
  \\&= \Norm[U]{u}\Norm[V]{v},
\end{aligned}
\end{equation*}
the right part of the desired inequality holds with~$\mu = 1$.
On the other hand, we only managed to investigate the 
value of the \emph{inf-sup} constant~$\alpha$ numerically.

Let us start by recasting the problem in terms of matrices.
The operator~$A$ can be naturally extended to~$\widetilde{A}$
acting on~$\widetilde{U} = D_h^\sigma \times D_{0,h}^2 \times \widetilde{D_h^p}$,
where
\begin{equation}
\widetilde{D_h^p} = \left\{p \in D_h : p|_{\Gamma_p} = 0\right\},
\end{equation}
that is, the space without zero mean pressure constraint.
It can be proved that the kernel of~$\widetilde{A}$ consists of
zero mean pressures:
\begin{equation}
\ker \widetilde{A} = \left\{(0, 0, c) :c \in \mathbb{R}\right\}
\end{equation}
which is orthogonal to~$U \subset \widetilde{U}$.
Since clearly~$U + \ker \widetilde{A} = \widetilde{U}$,
we have~$\left(\ker \widetilde{A}\right)^\perp = U$.
Similar domain extension can be applied to~$A^*$
and the norms of~$U$ and~$V$,
giving us~$\widetilde{U}$ and~$\widetilde{V}$ with simple bases
consisting of delta functions.

Let~$G$ and~$M$ denote the Gram matrices of the scalar products
of~$\widetilde{V}$ and~$\widetilde{U}$, respectively,
and let us use~$B$ to refer to the
matrix representing the bilinear form~$u, v \mapsto \Prod[h]{\widetilde{A}u}{v}$.
We will use~$u, v$ to refer to both elements of~$\widetilde{U}$ and~$\widetilde{V}$,
and their vector representations, as long as there is no confusion.
We are interested in computing
\begin{equation}
  \alpha =
  \inf_{\substack{v \in V\\ v \neq 0}}
  \sup_{\substack{u \in U\\ u \neq 0}}
  \frac{v^T B u}{\Norm[V]{v}\Norm[U]{u}}
  =
  \inf_{\substack{v \in V\\ \Norm[V]{v} = 1}}
  \sup_{\substack{u \in U\\ \Norm[U]{u} = 1}}
  v^T B u
\end{equation}
We can compute the inner supremum for a fixed~$v$ by solving a constrained
optimization problem:
\begin{equation}
\begin{aligned}
  \textbf{max }&\ v^T B u \\
  \textbf{s.t.}&\ \|u\|_U^2 = u^T M u = 1
\end{aligned}
\end{equation}
Given~$u \in \widetilde{U}$, we can write it as~$u = u_0 + w$,
$u_0 \in U$, $w \in \ker \widetilde{A}$,
and we have~$v^T Bu = v^TBu_0$, $\|u\|_U^2 = \|u_0\|_U^2 + \|w\|^2_{\widetilde{U}}$,
which shows that the sought maximum must be attained by an element of~$U$.
Therefore, we can safely seek the maximum on the entire~$\widetilde{U}$.
The above problem gives us a Lagrangian function
\begin{equation}
\label{eq:matrix-infsup}
  \mathcal{L}(u, \lambda) = v^T Bu - \lambda\left(u^T M u - 1\right)
\end{equation}
whose stationary points satisfy
\begin{equation}
  0 = \nabla_u {\cal L} = v^T B - 2\lambda u^T M
\end{equation}
or, after transposition,
\begin{equation}
  M u = 2 \lambda B^T v
\end{equation}
Absolute value of~$\lambda$ can be chosen so that~$\Norm[U]{u} = 1$,
and the sign determines whether such~$u$ is a minimum or a maximum of
problem~$\eqref{eq:matrix-infsup}$.
Choosing~$u$ with~$\lambda > 0$ and plugging it into~\eqref{eq:matrix-infsup}
we obtain
\begin{equation}
\begin{aligned}
  \alpha &=
  \inf_{\substack{v \in V\\ v \neq 0}}
  \frac{v^T B \left(M^{-1}B^T v\right)}{
    \sqrt{v^T G v}
    \sqrt{\left(M^{-1}B v\right)^T M \left(M^{-1} B^T v\right)}}
    \\&=
    \inf_{\substack{v \in V\\ v \neq 0}}
    \sqrt{\frac{v^T B M^{-1}B^T v}{v^T G v}}
\end{aligned}
\end{equation}
Let~$R = B M^{-1}B^T$.
Finding the value of~$\alpha$ is therefore equivalent to minimizing the
generalized Rayleigh quotient
\begin{equation} \alpha^2=
\inf_{\substack{v \in V\\ v \neq 0}} \frac{v^T R v}{v^T G v}
\end{equation}
Since~$R$, $G$ are symmetric,
$G$ is positive definite
and~$R$ is positive semidefinite,
the generalized eigenvalue problem
\begin{equation}
\label{eq:gen-eigen}
  Rv = \lambda Gv
\end{equation}
posed on~$\widetilde{V}$ has a solution consisting of
non-negative eigenvalues
\begin{equation}
  0 \leq \lambda_0 \leq \lambda_1 \leq \cdots \leq \lambda_N
\end{equation}
and the associated eigenvectors~$\left\{v_i\right\}_{i=0}^N$
form a basis of~$\widetilde{V}$,
orthonormal with respect to~$G$.
As~$R = B M^{-1} B^T$, and~$B$ represents the operator~$\widetilde{A}$
with a non-trivial, one-dimensional kernel, $\lambda_0 = 0$ and~$v_0$
spans~$\ker\widetilde{A} = \ker\widetilde{A}^*$.
Since~$v_1,\ldots,v_N$ are orthogonal to~$v_0$,
we have~$\Span\left\{v_1,\ldots,v_N\right\} = U$.
We can thus write an arbitrary element of~$V$ as~$v = c_1 v_1 + \cdots + c_N v_N$,
and conclude that
\begin{equation}
\frac{v^T R v}{v^T G v} =
\frac{\lambda_1 c_1^2 + \cdots + \lambda_N c_N^2}{c_1^2 + \cdots + c_N^2}
\geq
\lambda_1
\end{equation}
In other words, the value of the \emph{inf-sup} constant is~$\alpha = \sqrt{\lambda_1}$,
and thus it can be determined numerically
by computing the second smallest eigenvalue of the problem~\eqref{eq:gen-eigen}.
Such computation reveals that $\alpha \geq 1/8$ for all the checked grid sizes, see Figure \ref{fig:spectral}. Notice that we do not really need to compute spectrum of matrix ${\bf R}$, we only compute $\lambda_1$ for the verification of the  CRVPINN method.

\begin{figure}
 \centering
\includegraphics[width=0.8\textwidth]{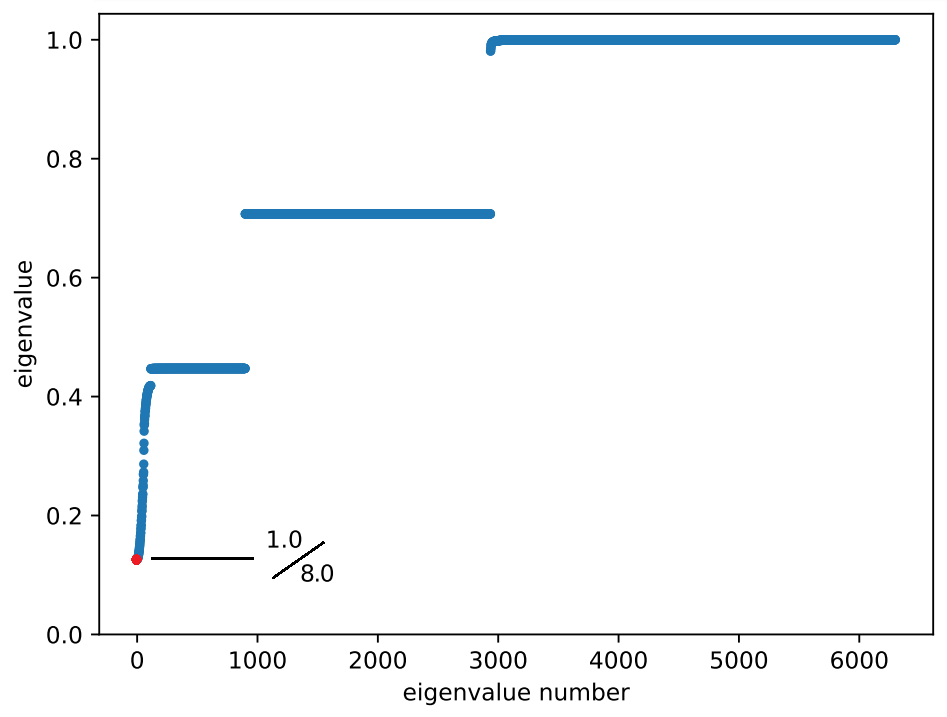}
\caption{Spectral decomposition of matrix ${\bf R}$.}
\label{fig:spectral}
\end{figure}

 \subsubsection{Robust loss for Stokes problem}

System~\eqref{eq:strongStokes} can be rewritten as
\begin{equation}
\begin{aligned}    \label{decomposedStokes}
w_1(x_1,x_2)& =\frac{\partial u_1(x_1,x_2)}{\partial x_1} \\
w_2(x_1,x_2)& =\frac{\partial u_1(x_1,x_2)}{\partial x_2} \\
z_1(x_1,x_2)& =\frac{\partial u_2(x_1,x_2)}{\partial x_1} \\
z_2(x_1,x_2)& =\frac{\partial u_2(x_1,x_2)}{\partial x_2} \\
-\frac{\partial w_1(x_1,x_2)}{\partial x_1}-\frac{\partial w_2(x_1,x_2)}{\partial x_2}+\frac{\partial p(x_1,x_2)}{\partial x_1} &= f_1(x_1,x_2), \\
-\frac{\partial z_1(x_1,x_2)}{\partial x_1}-\frac{\partial z_2(x_1,x_2)}{\partial x_2}+\frac{\partial p(x_1,x_2)}{\partial x_2} &= f_2(x_1,x_2), \\
\frac{\partial u_1(x_1,x_2)}{\partial x_1}+\frac{\partial u_2(x_1,x_2)}{\partial x_2}&=0.
\end{aligned}
\end{equation}
We define the following residual functions
\begin{equation}
\begin{aligned}
\label{eq:RESStokes}
RES_{6a}(u_\theta)=\frac{\partial u_1}{\partial x_1}-w_1, \\
RES_{6b}(u_\theta)=\frac{\partial u_1}{\partial x_2}-w_2 , \\
RES_{6c}(u_\theta)=\frac{\partial u_2}{\partial x_1}-z_1, \\
RES_{6d}(u_\theta)=\frac{\partial u_2}{\partial x_2}-z_2, \\
RES_{6e}(u_\theta)=-\frac{\partial w_1}{\partial x_1}-\frac{\partial w_2}{\partial x_2}+\frac{\partial p}{\partial x_1}-f_1, \\
RES_{6f}(u_\theta)=-\frac{\partial z_1}{\partial x_1}-\frac{\partial z_2}{\partial x_2}+\frac{\partial p}{\partial x_2}-f_2
, \\
RES_{6g}(u_\theta)=\frac{\partial u_1(x_1,x_2)}{\partial x_1}+\frac{\partial u_2(x_1,x_2)}{\partial x_2}. \\
\end{aligned} 
\end{equation}
We define the following robust loss
\begin{equation}
LOSS(u_\theta)=RES(u_\theta)^T
    \begin{bmatrix}
        G_{\Vsigma} & G_{\Vsigma \U} & G_{\Vsigma p} \\
        G_{\Vsigma \U}^T & G_\U & 0 \\
        G_{\Vsigma p}^T & 0 & G_p
    \end{bmatrix}^{-1} RES(u_\theta),
\end{equation}
where
\begin{equation}
RES(u_\theta)=
\begin{bmatrix}
RES_{6a}(u_\theta) \\ RES_{6b}(u_\theta) \\ 
RES_{6c}(u_\theta) \\ RES_{6d}(u_\theta) \\
RES_{6e}(u_\theta) \\ RES_{6f}(u_\theta) \\ 
RES_{6g}(u_\theta)
\end{bmatrix}.
\end{equation}

The Dirichlet boundary condition is obtained by multiplication of the output from the neural network by a smooth function equal to zero on the boundaries, see Figure \ref{fig:zero}, followed by adding 
a maximum of the four functions presented in Figure \ref{fig:StokesFunctions} multiplied by the ${\bf g}$ function (definition of the Dirichlet b.c.).

We have estimated numerically the continuity constant $\frac{1}{\mu}=1$ and the inf-sup constnat $\frac{1}{\alpha}=8$.
We compare in Figure \ref{fig:StokesConv} the robust loss of CRVPINN method with the true error and we obtain 
\begin{eqnarray}
    1\sqrt{LOSS(u_\theta)} < \|u_\theta - u_{exact} \|  < 8  \sqrt{LOSS(u_\theta)}
\end{eqnarray}
as expected.

The comparison between the exact solution and the CRVPINN solution, namely the velocity and pressure components and the error maps are presented in Figure \ref{fig:RPINN_Stokes}.

\begin{figure}[!htb]
  \centering
\includegraphics[width=0.48\textwidth]{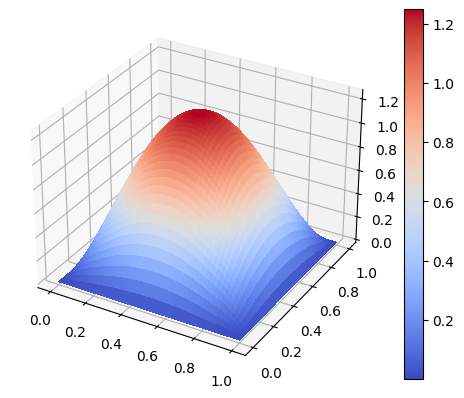}
\caption{The function used to enforce the zero on the boundary of the domain.}  \label{fig:zero}
\end{figure}

\begin{figure}[!htb]
  \centering
\includegraphics[width=0.48\textwidth]{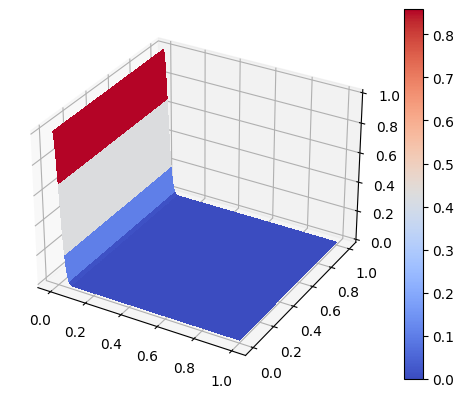}\includegraphics[width=0.48\textwidth]{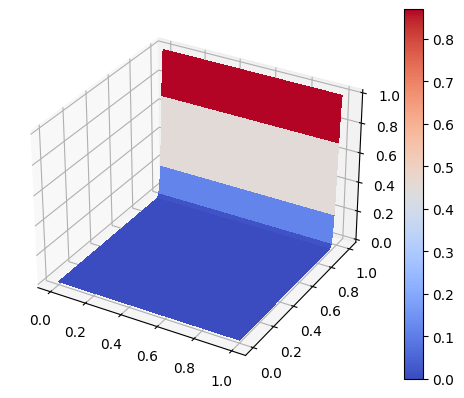}
\includegraphics[width=0.48\textwidth]{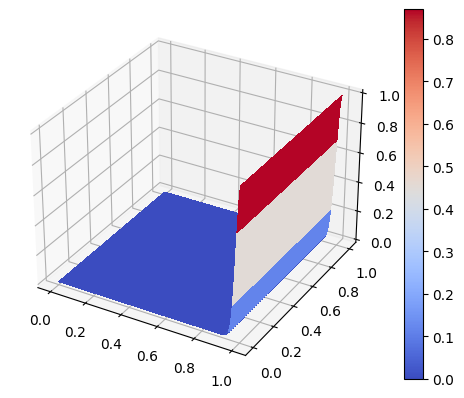}\includegraphics[width=0.48\textwidth]{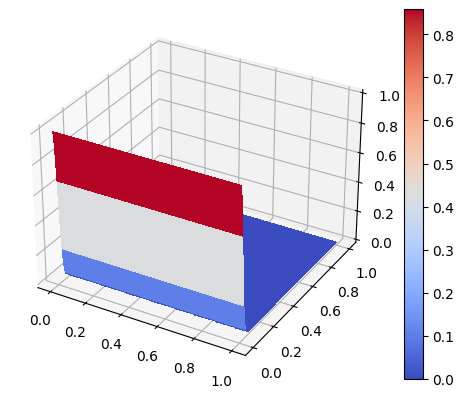}
  \caption{The functions used to enforce the Dirichlet boundary condition for a Stokes problem.}
  \label{fig:StokesFunctions}
\end{figure}

\begin{figure}[!htb]
  \centering
\includegraphics[width=
\textwidth]{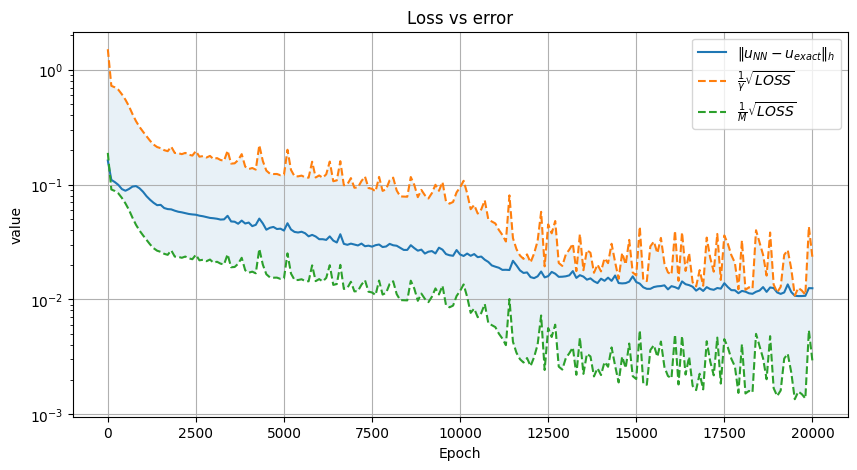}
  \caption{Robust loss and the true error for the Stokes problem with manufactured solution coincides with our estimates for the continuity constant $\frac{1}{\mu}=1$ and inf-sup constant $\frac{1}{\alpha}=8$ \\ 
  $1\sqrt{LOSS(u_\theta)} < \|u_\theta - u_{exact} \|  < 8  \sqrt{LOSS(u_\theta)}$}
  \label{fig:StokesConv}
\end{figure}

\begin{figure}[!htb]
\centering
\includegraphics[width=1.0\textwidth]{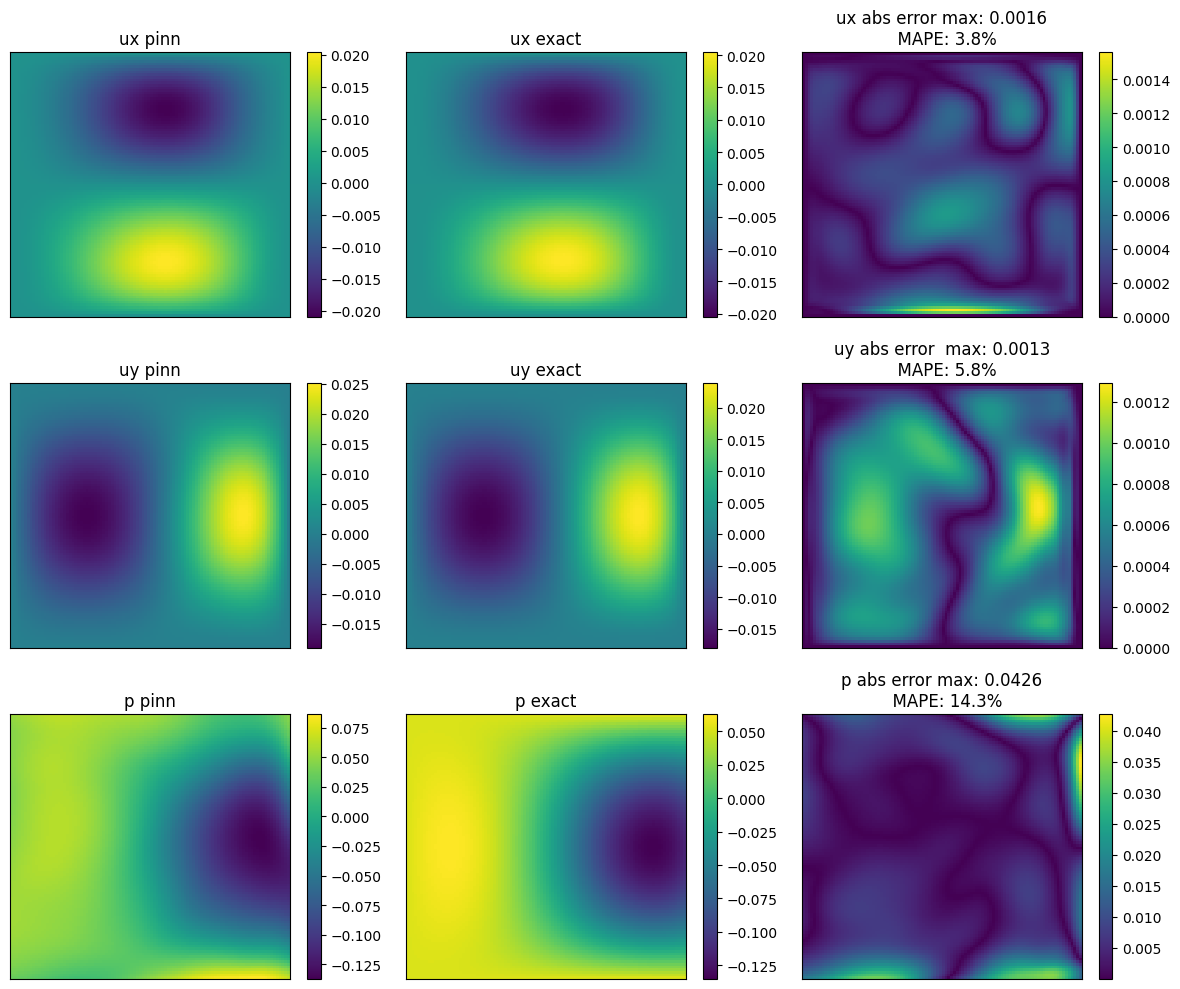}
  \caption{Comparison between the solution generated by CRVPINN and the exact solution of the Stokes problem with manufactured solution.}
  \label{fig:RPINN_Stokes}
\end{figure}



\subsection{Summary}

By saying that the loss of CRVPINN is robust, we mean that the CRVPINN loss value is the robust estimation for the true
error, which means that up to some multiplicative constant, the loss defines a lower and upper bound for the true error. This is expressed in our Remark 5. For some classes of PDEs (for example, the Laplace problem), the multiplicative constant is equal to 1. In this case, the CRVPINN loss is equal to the true error. 
Some differences present e.g. in Figure \ref{fig:RPINN_Example2_convergence} for the Laplace problem with sin-exp right-hand side may be a consequence of the fact that in our theory we compute the derivatives using the point values over the discrete domain, while in the code we differentiate the neural network using automatic differentiation provided by PyTorch.
{For the other problems, like advection-diffusion or the Stokes problem, the constants are not necesserly equal to 1, and this implies some differences in the estimates. Neverless, the robust loss is a good estimator of the true error and it can be employed for monitoring the convergence.}

Summing up, the CRVPINN
robust loss is equal to or close to the true error, the
difference between the exact and the CRVPINN solution. Thus,
while training CRVPINN, we know the quality of the solution just by looking at the value of the loss function (even if we do not know the exact solution).

\section{Computational costs}

The loss in the PINN
method involves the summation of the residual for all
the considered points. The loss in CRVPINN involves the
multiplication of the residual vector by the inverse of
the Gram matrix. The Gram matrix is generated and inverted only once at the beginning of training. Thus, the
computational overhead of CRVPINN with respect to PINN
is small. Below, we report the times for 20,000 iterations, 2 layers, and a neural network with 100 neurons
each, and a training rate of 0.001, using 100x100 points
for the training. Problem 1 Google Colab execution on V100 card for the implementation of PINN takes 200 seconds, while CRVPINN
takes 296 seconds. Problem 2 Google Colab Google Colab execution on V100 card for the  implementation of PINN takes 208 seconds, while CRVPINN
takes 301 seconds. Finally, Problem 3 Google Colab execution on V100 card for the implementation of PINN takes 228 seconds, while CRVPINN
takes 325 seconds.

\section{Conclusions}\label{Sec:Conc}
In this article, we proposed a collocation method for Robust Variational Physics Informed Neural Networks. 
We numerically show the robustness of our loss function, while having the computational cost of the method similar to PINN. For all the numerical examples, it can be used as the true error estimator.
Our robust loss function involves the inverse of the Gram matrix computed for the special inner product. The norm related to this inner product allows us to show that the bilinear form of the discrete weak formulation of our PDE is bounded and inf-sup stable. This project is the transfer of knowledge from the theory of finite difference methods into the RVPINN.

Summing up, the robust loss function for CRVPINN is given by the norm of the vector of residuals
computed at various points, induced by the inverse of the Gram matrix. Namely, 
\begin{itemize}
\item We select the PDE, e.g., the advection-diffusion, and we derive its discrete weak formulation, e.g. (\ref{eq:discweak})
\item 
We seek the inner product for which 
the form $b(u_{i,j},v_{i,j})$ of the discrete weak formulation (\ref{eq:discweak}) is a bounded inf-sup stable 
bilinear form in the induced norm. For the advection-diffusion, we select the inner product (\ref{eq:H1}). 
\item We select the test functions. They correspond with the points selected for training, and they are given by (\ref{eq:test}), namely $ \{ \delta_{i,j}(x) \}_{i,j}$.
\item We compute the Gram matrix ${\bf G}$, the inner products of the test functions. In our case, with test functions given by (\ref{eq:test}), and the inner product given by (\ref{eq:H1}), the Gram matrix is prescribed by (\ref{eq:gram}).
\item We compute LU factorization (once at the beginnging) of the sparse Gram matrix to obtained ${\bf G}={\bf L}{\bf U}$.
\item To speedup the training process, we perform forward and backward substitutions in each iteration.
The robust loss function, defined as $LOSS(\theta)=RES(\theta)^T{\bf G}^{-1}RES(\theta)$, is obtained from
backward substitution of ${\bf U} z = RES(\theta)$, followed by foward substitution of ${\bf L} q = z$, following by two-vectors multiplication
 $LOSS(\theta) = RES_1^T(\theta) q$
\end{itemize}

The future work may involve extension of the method to 
other classical problems solved by finite element method \cite{c10}, including
fluid flow problems \cite{c16},  structural analysis \cite{c7}, phase-field problems \cite{c14}, or space-time problems \cite{c11}.

In general, our CRVPINN method can be applied to a large
class of PDEs. To develop the CRVPINN formulation of
a given PDE, a proper inner product for the construction of the Gram matrix has to be designed. The CRVPINN
method uses the inner product definitions to obtain the robust loss function in a similar way as Residual Minimization \cite{c4} and Discontinuous Petrov-Galerkin \cite{c8} methods enrich the classical finite element method formulations. Our future work will
involve development of CRVPINN formulations for linear elasticity, time-harmonic Maxwell equations, as
well as for the transient and non-linear problems, including heat transfer, non-linear flow in heterogeneous media, Navier-Stokes, plasticity, transient Maxwell equations. For each of these problems, the inner product that
results in inf-sup stability of the discrete weak formulation must be developed, which will be a subject of our
future work.

\section{Acknowledgments}
This work was supported by the program ,,Excellence initiative - research university'' for the AGH University of Krakow.
This Project has received funding from the European Union’s Horizon Europe research and innovation programme under the Marie Sklodowska-Curie grant agreement No 101119556.

\appendix


\end{document}